\PassOptionsToPackage{numbers,compress}{natbib}
\documentclass{article}

\usepackage[preprint]{neurips_2026}

\usepackage[utf8]{inputenc}
\usepackage[T1]{fontenc}
\usepackage{hyperref}
\hypersetup{hidelinks}
\usepackage{url}
\usepackage{booktabs}
\usepackage{amsfonts}
\usepackage{amsmath,amssymb,amsthm,mathtools,bm}
\usepackage{nicefrac}
\usepackage{microtype}
\usepackage[table]{xcolor}
\usepackage{graphicx}
\usepackage{subcaption}
\usepackage{array}
\usepackage{multirow}
\usepackage{tabularx}
\usepackage{algorithm}
\usepackage{algpseudocode}
\usepackage{enumitem}
\usepackage{makecell}
\usepackage{cleveref}
\DeclareMathOperator*{\minimize}{min}

\graphicspath{{Figures/}}
\allowdisplaybreaks

\newtheorem{theorem}{Theorem}
\newtheorem{proposition}{Proposition}
\newtheorem{remark}{Remark}

\newcommand{\method}{KLA}
\newcommand{\R}{\mathbb{R}}
\newcommand{\trans}{^{\mathsf{T}}}
\newcommand{\diag}{\operatorname{Diag}}
\newcommand{\Tr}{\operatorname{tr}}
\newcommand{\Loss}{\mathcal{L}}
\newcommand{\Aset}{\mathcal{A}}
\newcommand{\wt}[1]{\widetilde{#1}}
\newcommand{\norm}[1]{\lVert #1 \rVert_2}
\newcommand{\fro}[1]{\lVert #1 \rVert_F}

\newcommand{\eps}{\varepsilon}
\newcommand{\od}{\odot}
\newcommand{\I}{I}
\newcommand{\rowhi}{\rowcolor{gray!10}}
\newcommand{\kmark}[1]{\textcolor{blue}{#1}}

\title{Kaczmarz Linear Attention}
\author{%
  Jiaxuan Zou \\
  School of Mathematics and Statistics \\
  Xi'an Jiaotong University \\
  Xi'an, China \\
  \And
  Ruifeng Ren \\
  Gaoling School of Artificial Intelligence \\
  Renmin University of China \\
  Beijing, China \\
  \And
  Yong Liu\thanks{Corresponding author.} \\
  Gaoling School of Artificial Intelligence \\
  Renmin University of China \\
  Beijing, China \\
  \texttt{liuyonggsai@ruc.edu.cn}
}

\begin{document}

\maketitle


\begin{abstract}
Long-context language modeling remains central to modern sequence modeling, but the quadratic cost of Transformer attention makes scaling computationally prohibitive.
Linear recurrent models address this bottleneck by compressing the context into a fixed-size state, making the rule that forgets, writes, and edits information a central design problem.
To address state maintenance, Gated DeltaNet (GDN) combines gated state decay with delta-rule residual writes, using a learnable coefficient to balance forgetting and update magnitude.
However, this coefficient is learned empirically rather than derived from the underlying objective, which can lead to suboptimal update magnitudes.
We revisit the online-regression objective underlying GDN and, inspired by the Kaczmarz projection method, derive the key-norm-normalized dynamic step size $\beta_t = \kmark{\eta_t/(\norm{k_t}^2+\eps)}$ for residual updates.
We propose \textbf{Kaczmarz Linear Attention} (\method{}), a one-scalar modification of GDN that preserves the state shape, gates, linear recurrence, and chunkwise parallel algorithm.
At the 0.4B scale with a 1B-token budget, \method{} achieves the lowest validation perplexity among evaluated linear-time baselines (8.09 vs.\ 8.50 for GDN) and remains stable up to 65K tokens.
On controlled tasks, \method{} reaches 100\% on single-needle-in-a-haystack (S-NIAH) retrieval, improves $8\times$ multi-query associative recall (MQAR) by 7.03 points over GDN, and delivers $2.1\times$ higher decode throughput at 32K context.
These results suggest that the key-norm-normalized Kaczmarz coefficient is a first-order design axis for delta-rule sequence models: it improves accuracy, extrapolation, and decoding efficiency without changing the recurrent state or hardware kernel.
An anonymized implementation is included in the supplementary material and available at
\url{https://github.com/anonymous-kla-review/kla-anonymous}.
\end{abstract}

\section{Introduction}
\label{sec:intro}

Softmax attention gives Transformers strong associative recall, but its token-token attention matrix scales quadratically with sequence length, increasing long-context prefill latency, activation memory, and train-short/test-long evaluation cost~\citep{press2022trainshorttestlong,shaham2022scrollsstandardizedcomparisonlong}.
IO-aware kernels and sequence-parallel implementations reduce constant factors, yet they do not remove this asymptotic cost~\citep{NIPS2017_3f5ee243,bahdanau2016neuralmachinetranslationjointly,dao2022flashattentionfastmemoryefficientexact,dao2023flashattention2fasterattentionbetter,brandon2023stripedattentionfasterring,liu2023blockwiseparalleltransformerlarge}.

Linear-time sequence models seek to alleviate this bottleneck by replacing token-token attention with a fixed-size recurrent state.
As all past-token information is compressed into bounded memory, the state-update rule---which determines what the model writes, overwrites, preserves, and later retrieves---becomes the central design choice.
Different lines of work propose different update rules based on distinct design insights.
Linear attention and fast-weight models use additive key-value writes~\citep{pmlr-v119-katharopoulos20a,pmlr-v139-schlag21a}.
Gated recurrent attention adds data-dependent decay~\citep{sun2023retentivenetworksuccessortransformer,peng2023rwkvreinventingrnnstransformer,pmlr-v235-yang24ab,qin2024hgrn}.
Selective state-space or hybrid models scale related recurrence mechanisms to large language models~\citep{gu2024mamba,dao2024transformersssmsgeneralizedmodels,de2024griffinmixinggatedlinear,lieber2024jambahybridtransformermambalanguage}.
Although these models reduce quadratic attention to linear-time recurrence, their writes do not explicitly account for the key-value associations already stored in the state.
When similar keys recur, repeated writes can redundantly accumulate related information instead of editing the existing association.

Delta-rule models address this redundancy by writing the residual between the target value and the value predicted by the current state.
DeltaNet therefore overwrites a specific key direction instead of only adding or decaying information~\citep{yang2024parallelizing}.
To control the residual-write magnitude more finely, Gated DeltaNet (GDN) introduces a learned scalar that balances forgetting and updating and improves long-context retrieval~\citep{yang2025gated}.
However, this scalar is primarily an empirical design choice rather than a theoretically specified step size.
An inappropriate coefficient can over-correct large-norm keys, under-correct small-norm keys, and leave the write scale mismatched to the current key.
Because the GDN update admits an online-regression view, the coefficient can instead be derived from the per-token optimization problem.

\method{} resolves this coefficient problem by treating each delta-rule write as a Kaczmarz projection.
The key insight is simple: a key-value write imposes one linear constraint, $S_t\trans k_t = v_t$, and the closest state satisfying that constraint is reached with the Kaczmarz step size $\beta_t=\kmark{\eta_t/(\norm{k_t}^2+\eps)}$~\citep{kaczmarz1937}.
This self-normalized projection turns the gated delta rule into an exact per-token update in the unrelaxed case while preserving the GDN state shape, gates, linear recurrence, and chunkwise parallel kernel.
Thus \method{} changes the coefficient, not the architecture.

We make four contributions:
\begin{itemize}[leftmargin=1.5em,itemsep=0.3em,topsep=0.3em]
    \item \textbf{Self-normalized delta rule.} We derive \method{} from the Kaczmarz projection and show that its coefficient is simultaneously an orthogonal projection step, an exact line-search minimizer, and a normalized gradient step for the per-token loss.
    \item \textbf{Drop-in recurrent algorithm.} We show that \method{} preserves the GDN recurrence and chunkwise solver; the kernel changes only through the diagonal coefficient matrix $B=\diag(\beta_1,\ldots,\beta_C)$.
    \item \textbf{Empirical gains.} At the 0.4B scale with a 1B-token budget, \method{} obtains the lowest validation perplexity among evaluated linear-time baselines and improves synthetic retrieval and state-tracking, including single-needle-in-a-haystack (S-NIAH), multi-query associative recall (MQAR), and Stack.
    \item \textbf{Preserved efficiency.} \method{} keeps $O(n)$ time and memory complexity, matches GDN prefill latency, and improves decode throughput without a new state structure or hardware kernel.
\end{itemize}

\Cref{tab:main_result} summarizes the main language-modeling result.

\begin{table}[h]
\centering
\small
\setlength{\tabcolsep}{5pt}
\renewcommand{\arraystretch}{1.12}
\begin{tabular}{lcccccc}
\toprule
 & \textbf{KLA} & DeltaNet & GLA & GDN & Longhorn & Mamba2 \\
\midrule
Validation PPL $\downarrow$ & \textbf{8.09} & 8.26 & 8.42 & 8.50 & 8.79 & 9.41 \\
\bottomrule
\end{tabular}
\caption{Pretraining validation perplexity (PPL) at the 0.4B scale with a 1B-token budget on SlimPajama-672B~\citep{cerebras2023slimpajama}. All models use the same tokenizer, optimizer, and 4K training context.}
\label{tab:main_result}
\end{table}

\section{Background and Problem Analysis}
\label{sec:background}

Linear-time sequence models replace the token-token attention matrix with a recurrent state $S_t \in \R^{d_k \times d_v}$, so the main design choice is the write rule applied to this state. Linear attention writes each key-value pair additively, $S_t = S_{t-1}+k_t v_t\trans$, and a query reads $o_t=S_t\trans q_t$~\citep{pmlr-v119-katharopoulos20a}. Gated Linear Attention (GLA) adds data-dependent forgetting, $S_t=\diag(\alpha_t)S_{t-1}+k_t v_t\trans$, so earlier content can decay before the next write~\citep{pmlr-v235-yang24ab}. In the scalar-gated notation used below, this forget-then-write pattern is
\begin{equation*}
\wt{S}_{t-1}=\alpha_t S_{t-1},\qquad
S_t=\wt{S}_{t-1}+k_t v_t\trans .
\end{equation*}
These recurrent forms are efficient; the delta-rule family changes the write itself.

Delta-rule models make the state content-editable by writing the residual between the target value and the current state prediction. In the scalar-gated GDN template~\citep{yang2025gated}, the residual is measured and written back along the current key direction:
\begin{equation*}
S_t = \wt{S}_{t-1}+\beta_t k_t e_t\trans
= (\I-\beta_t k_t k_t\trans)\wt{S}_{t-1}+\beta_t k_t v_t\trans,
\qquad
 e_t = v_t-\wt{S}_{t-1}\trans k_t .
\end{equation*}
The rank-one correction edits the state in the active key direction, and $\beta_t$ determines the step length of this edit. After decay, token $t$ can be viewed through a data-fit loss and its regularized online-regression form:
\begin{equation*}
\Loss_t(S)=\frac{1}{2}\|S\trans k_t-v_t\|_2^2,
\qquad
\mathcal{J}_t(S;\lambda_t)=\Loss_t(S)+\frac{\lambda_t}{2}\|S-\wt{S}_{t-1}\|_F^2 .
\end{equation*}
Here $\Loss_t$ fits the current key-value association, while the regularization term keeps the new state close to the decayed memory. The residual direction $k_t e_t\trans$ is the negative gradient direction of $\Loss_t$ at $\wt{S}_{t-1}$, and the scalar coefficient controls how strongly this residual is written. In the GDN template, this coefficient is supplied by a learned gate, $\beta_t=\eta_t$; equivalently, the update magnitude is learned empirically rather than derived from the regularized objective. The next section derives this coefficient from the per-token optimization problem itself.

\section{Kaczmarz Linear Attention}
\label{sec:method}

\subsection{Kaczmarz coefficient from constrained optimization}

The regularized view above isolates the residual direction from the scalar that controls its magnitude. We now derive the scalar by treating a key-value write as a single linear constraint on the recurrent state. The Kaczmarz method is a classical row-action method for solving linear systems by repeatedly projecting the current iterate onto one constraint hyperplane at a time~\citep{kaczmarz1937}. For token $t$, we choose the state closest to the decayed state $\wt{S}_{t-1}$ that satisfies the current key-value constraint:
\begin{equation}
\minimize_{S \in \R^{d_k \times d_v}} \;\tfrac{1}{2}\fro{S - \wt{S}_{t-1}}^2 \quad \text{subject to} \quad S\trans k_t = v_t.
\label{eq:constrained}
\end{equation}
This problem asks for the minimum-norm state change that makes key $k_t$ read out value $v_t$. The constraint set $\Aset_t=\{S:S\trans k_t=v_t\}$ is an affine subspace with normal direction $k_t$, so the optimal correction has the form $S_t=\wt{S}_{t-1}+k_t c\trans$ for some $c\in\R^{d_v}$. Substituting this form into the constraint gives $\wt{S}_{t-1}\trans k_t+c\norm{k_t}^2=v_t$, hence $c=e_t/\norm{k_t}^2$, and the exact projection is
\begin{equation}
S_t = \wt{S}_{t-1} + \frac{1}{\norm{k_t}^2}\,k_t\, e_t\trans.
\label{eq:kaczmarz_exact}
\end{equation}
Thus Kaczmarz writes the residual $e_t$ with a step length normalized by the key energy. The practical \method{} update adds a learned relaxation gate and a numerical stabilizer,
\begin{equation}
S_t = \wt{S}_{t-1} + \beta_t\,k_t e_t\trans,
\qquad
\beta_t = \frac{\kmark{\eta_t}}{\kmark{\norm{k_t}^2 + \eps}}.
\label{eq:kla}
\end{equation}
When $\eta_t=1$ and $\eps=0$, \eqref{eq:kla} reduces to the exact Kaczmarz projection in \eqref{eq:kaczmarz_exact}. The relaxed update also has an online-regression interpretation:
\begin{proposition}[Exact proximal form]
\label{prop:proximal}
For any $\eta_t \in (0,1]$ and $\eps \ge 0$, define $\mu_t = \eta_t / ((1-\eta_t)\norm{k_t}^2+\eps)$. The relaxed update in~\eqref{eq:kla} is the exact minimizer of
\[
\min_S\; \frac{1}{2}\fro{S-\wt{S}_{t-1}}^2 + \frac{\mu_t}{2}\|S\trans k_t-v_t\|_2^2.
\]
\end{proposition}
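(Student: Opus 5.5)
The plan is to use first-order optimality of a strictly convex quadratic, then read off the minimizer with the same rank-one ansatz used to derive \eqref{eq:kaczmarz_exact}. The objective
\[
F(S)=\tfrac{1}{2}\fro{S-\wt{S}_{t-1}}^2+\tfrac{\mu_t}{2}\|S\trans k_t-v_t\|_2^2
\]
is the sum of a $1$-strongly convex term and a convex term whenever $\mu_t\ge 0$. It therefore has a unique minimizer, characterized by $\nabla F(S)=0$. Differentiating in the Frobenius inner product gives
\[
(S-\wt{S}_{t-1})+\mu_t\,k_t\,(S\trans k_t-v_t)\trans=0 .
\]
This equation already shows that $S-\wt{S}_{t-1}$ lies in the range of $k_t(\cdot)\trans$. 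So I will write $S=\wt{S}_{t-1}+k_t c\trans$ for an unknown $c\in\R^{d_v}$, exactly as in the constrained derivation.

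Substituting the ansatz gives $S\trans k_t-v_t=c\norm{k_t}^2-e_t$, with $e_t=v_t-\wt{S}_{t-1}\trans k_t$ as in the background section. The stationarity condition then becomes $k_t\bigl(c+\mu_t(c\norm{k_t}^2-e_t)\bigr)\trans=0$. For $k_t\neq 0$ this yields
\[
c=\frac{\mu_t}{1+\mu_t\norm{k_t}^2}\,e_t .
\]
The remaining step is the scalar identity
\[
\frac{\mu_t}{1+\mu_t\norm{k_t}^2}=\frac{\eta_t}{(1-\eta_t)\norm{k_t}^2+\eps+\eta_t\norm{k_t}^2}=\frac{\eta_t}{\norm{k_t}^2+\eps}=\beta_t,
\]
obtained by multiplying numerator and denominator by $(1-\eta_t)\norm{k_t}^2+\eps$. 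This recovers \eqref{eq:kla}, and uniqueness of the minimizer completes the argument.

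The main obstacle is degenerate parameters rather than the algebra. If $\eta_t=1$ and $\eps=0$, then $\mu_t$ has zero denominator and is $+\infty$. In that case I will interpret the penalized problem as its limit, namely the hard-constrained problem \eqref{eq:constrained}, whose solution is already shown to be \eqref{eq:kaczmarz_exact}. Alternatively, one can note that $\beta_t(\mu)\to 1/\norm{k_t}^2$ as $\mu\to\infty$. If $k_t=0$, then $F$ is minimized at $\wt{S}_{t-1}$, and the update also gives $S_t=\wt{S}_{t-1}$ because $k_t e_t\trans=0$, provided $\beta_t$ is well defined, i.e.\ $\eps>0$. I will state these conventions explicitly so the claimed equivalence holds on the whole parameter range $\eta_t\in(0,1]$, $\eps\ge 0$.
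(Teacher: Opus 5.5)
Your proposal is correct and matches the paper's proof. Both set the Frobenius gradient to zero, substitute a rank-one correction along $k_t$, solve $\tau=\mu_t/(1+\mu_t\norm{k_t}^2)$, and simplify to $\eta_t/(\norm{k_t}^2+\eps)$. Your extra steps tighten points the paper leaves implicit: deriving the ansatz from the stationarity equation, using strong convexity for uniqueness, and treating the cases $\eta_t=1$, $\eps=0$ and $k_t=0$, where $\mu_t$ or $\beta_t$ is undefined.
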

Thus, the practical coefficient can be read either as a relaxed Kaczmarz step or as the closed-form solution of a proximal local regression problem.

\paragraph{Connection to GDN.}
\label{sec:scale_mismatch}
To make the coefficient-level comparison explicit, return to the data-fit loss $\Loss_t$ defined in \cref{sec:background}. GDN and \method{} use the same decayed state $\wt{S}_{t-1}$, residual $e_t$, and rank-one write direction $k_t e_t\trans$; they differ only in the scalar multiplying that write. Along the one-parameter path $S(\tau)=\wt{S}_{t-1}+\tau k_t e_t\trans$, substituting into $\Loss_t$ gives
\begin{equation}
\Loss_t(S(\tau)) = \frac{1}{2}\bigl(1-\tau\norm{k_t}^2\bigr)^2\norm{e_t}^2.
\label{eq:line_search_loss}
\end{equation}
The exact line-search coefficient is therefore $\tau^*=1/\norm{k_t}^2$. GDN uses $\beta_t=\eta_t$, so it matches this step only when the learned scalar happens to equal $1/\norm{k_t}^2$ for the current token. In contrast, \method{} uses $\beta_t=\eta_t/(\norm{k_t}^2+\eps)$ and recovers the exact minimizer when $\eta_t=1$ and $\eps=0$. The improvement is not a new write direction; it is a key-energy correction to the GDN step length.

\Cref{tab:online_lens} places this coefficient-level change alongside representative recurrent layers. The comparison separates the local objective from the update rule while keeping the recurrent state fixed.
\begin{table*}[h]
\centering
\scriptsize
\setlength{\tabcolsep}{3.5pt}
\renewcommand{\arraystretch}{1.12}
\resizebox{\textwidth}{!}{%
\begin{tabular}{lll}
\toprule
Method & Local objective $\Loss_t$ & Update rule \\
\midrule
Linear attention & $-\langle S_{t-1}\trans k_t, v_t\rangle$ & $S_t = S_{t-1} + k_t v_t\trans$ \\
RetNet / Mamba2 & $-\beta_t\langle S_{t-1}\trans k_t,v_t\rangle + \frac{1}{2}\|\sqrt{1-\alpha_t}\,S_{t-1}\|_F^2$ & $S_t = \alpha_t S_{t-1} + \beta_t k_t v_t\trans$ \\
GLA & $-\langle S_{t-1}\trans k_t,v_t\rangle + \frac{1}{2}\|\sqrt{\diag(1-\alpha_t)}\,S_{t-1}\|_F^2$ & $S_t = \diag(\alpha_t)S_{t-1} + k_t v_t\trans$ \\
Longhorn & $\frac{\beta_t}{2}\|S_{t-1}\trans k_t-v_t\|_2^2$ & $S_t = (\I-\rho_t k_t k_t\trans)S_{t-1}+\rho_t k_t v_t\trans$ \\
DeltaNet & $\frac{\beta_t}{2}\|S_{t-1}\trans k_t-v_t\|_2^2$ & $S_t = (\I-\beta_t k_t k_t\trans)S_{t-1}+\beta_t k_t v_t\trans$ \\
GDN & $\frac{\beta_t}{2}\|\wt{S}_{t-1}\trans k_t-v_t\|_2^2$ & $S_t = (\I-\beta_t k_t k_t\trans)\wt{S}_{t-1}+\beta_t k_t v_t\trans$ \\
\rowhi \textbf{KLA} & $\frac{1}{2}\|\wt{S}_{t-1}\trans k_t-v_t\|_2^2$ & $S_t = \wt{S}_{t-1}+\dfrac{\kmark{\eta_t}}{\kmark{\norm{k_t}^2+\eps}}k_t\bigl(v_t-\wt{S}_{t-1}\trans k_t\bigr)\trans$ \\
\bottomrule
\end{tabular}}
\caption{Online-learning view of recurrent updates. \method{} is the only model in the table that exactly minimizes the per-token loss in the unrelaxed case ($\eta_t=1$, $\eps=0$).}
\label{tab:online_lens}
\end{table*}

\subsection{Why the Kaczmarz coefficient is natural}
\label{sec:theory}

The constrained optimization view gives a candidate coefficient, and the following result explains why this coefficient is natural. The same scalar simultaneously satisfies three desiderata that are usually treated separately in online learning: it is an orthogonal projection, an exact line-search minimizer, and a normalized gradient step. This unified characterization justifies why the key-norm term should appear in the delta-rule coefficient rather than in an auxiliary heuristic.

\begin{theorem}[Projection, line search, and normalized SGD]
\label{thm:projection}
Assume $\eta_t = 1$ and $\eps = 0$. Let $\wt{S}_{t-1} = \alpha_t S_{t-1}$, $e_t = v_t - \wt{S}_{t-1}\trans k_t$, and $\Aset_t = \{S : S\trans k_t = v_t\}$. The update
\[
S_t = \wt{S}_{t-1} + \frac{1}{\norm{k_t}^2}\,k_t\, e_t\trans
\]
satisfies $S_t\trans k_t = v_t$. Moreover, the same update is simultaneously:
\begin{enumerate}[leftmargin=1.5em,itemsep=0.1em,topsep=0.1em,label=(\roman*)]
    \item the orthogonal projection of $\wt{S}_{t-1}$ onto $\Aset_t$ in Frobenius norm;
    \item the exact minimizer of $\Loss_t(S)$ along the direction $\wt{S}_{t-1} + \tau k_t e_t\trans$; and
    \item a normalized SGD update on $\Loss_t$ with coefficient $1/\norm{k_t}^2$.
\end{enumerate}
\end{theorem}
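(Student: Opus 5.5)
The proof is a direct computation, carried out in four steps: feasibility, then items (i), (ii) and (iii) in that order. Throughout we assume $k_t\neq 0$, which the statement implicitly requires since it divides by $\norm{k_t}^2$.

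\textbf{Feasibility.} Transpose the update to get
\[
S_t\trans k_t=\wt{S}_{t-1}\trans k_t+\frac{1}{\norm{k_t}^2}e_t\,(k_t\trans k_t)=\wt{S}_{t-1}\trans k_t+e_t=v_t,
\]
so $S_t\in\Aset_t$.

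\textbf{Item (i).} We use the standard characterization of projection onto an affine set: a point $P\in\Aset_t$ is the Frobenius-orthogonal projection of $\wt{S}_{t-1}$ iff $\wt{S}_{t-1}-P$ is orthogonal to every direction of $\Aset_t$. The direction space is $\{D:D\trans k_t=0\}$. The displacement $S_t-\wt{S}_{t-1}=k_t c\trans$ with $c=e_t/\norm{k_t}^2$ satisfies $\langle k_t c\trans,D\rangle_F=\Tr(c\,k_t\trans D)=c\trans D\trans k_t=0$ for all such $D$. To get uniqueness and minimality, take any $S\in\Aset_t$ and write $S-\wt{S}_{t-1}=(S_t-\wt{S}_{t-1})+(S-S_t)$. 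Since $S-S_t$ is a direction of $\Aset_t$, the Pythagorean identity gives $\fro{S-\wt{S}_{t-1}}^2=\fro{S_t-\wt{S}_{t-1}}^2+\fro{S-S_t}^2$. Hence $S_t$ is the unique minimizer of \eqref{eq:constrained}. This repeats the derivation preceding \eqref{eq:kaczmarz_exact}, but now with the optimality verified rather than assumed from the ansatz.

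\textbf{Item (ii).} Use \eqref{eq:line_search_loss}. Along $S(\tau)=\wt{S}_{t-1}+\tau k_te_t\trans$ we have $S(\tau)\trans k_t-v_t=\tau\norm{k_t}^2e_t-e_t$, so $\Loss_t(S(\tau))=\tfrac12(1-\tau\norm{k_t}^2)^2\norm{e_t}^2$. This is a nonnegative convex quadratic in $\tau$ that vanishes at $\tau^*=1/\norm{k_t}^2$, so $\tau^*$ is a global minimizer. If $e_t\ne0$ it is the unique minimizer; if $e_t=0$, every $\tau$ gives the same point $S_t=\wt{S}_{t-1}$, so the claim is trivial. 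Setting $\tau=\tau^*$ recovers the stated update.

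\textbf{Item (iii).} Compute the gradient $\nabla_S\Loss_t(S)=k_t(S\trans k_t-v_t)\trans$. At $S=\wt{S}_{t-1}$ this gives $-\nabla\Loss_t(\wt{S}_{t-1})=k_te_t\trans$. The update is therefore $S_t=\wt{S}_{t-1}-\frac{1}{\norm{k_t}^2}\nabla\Loss_t(\wt{S}_{t-1})$, a single stochastic-gradient step on the per-token loss whose step size is normalized by the key energy $\norm{k_t}^2$. This energy is also the curvature (Lipschitz constant) of $\Loss_t$ along the key direction, i.e., the NLMS normalization.

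\textbf{Main obstacle.} The only step needing care is (i): justifying that the rank-one ansatz $S=\wt{S}_{t-1}+k_tc\trans$ is without loss of generality. The Pythagorean argument above handles this. Alternatively, one could use Lagrange multipliers, $S-\wt{S}_{t-1}=k_t\lambda\trans$, which works because the problem is a strictly convex objective under affine constraints. Items (ii) and (iii) are routine calculus.
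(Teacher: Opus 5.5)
Your proof is correct and follows essentially the same route as the paper: you verify the constraint directly, show the rank-one correction $k_t e_t\trans/\norm{k_t}^2$ is orthogonal to the tangent space $\{H : H\trans k_t = 0\}$, reduce the line search to the quadratic $\tfrac12(1-\tau\norm{k_t}^2)^2\norm{e_t}^2$, and identify the gradient $-k_t e_t\trans$ for the normalized SGD step. Your Pythagorean argument simply makes explicit the minimality and uniqueness that the paper delegates to the Hilbert projection theorem in its remark, and your Lagrange-multiplier aside matches the paper's alternative derivation.
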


\begin{remark}
By the projection theorem for Hilbert spaces, $S_t$ is the unique element of $\Aset_t$ closest to $\wt{S}_{t-1}$. The Kaczmarz update makes the minimum-norm change to the state while exactly satisfying the constraint.
\end{remark}

\begin{proposition}[Residual contraction]
\label{prop:contraction}
Let $e_t = v_t - \wt{S}_{t-1}\trans k_t$ and $e_t^+ = v_t - S_t\trans k_t$. Then
\begin{equation}
e_t^+ = \Bigl(1 - \eta_t\frac{\norm{k_t}^2}{\norm{k_t}^2 + \eps}\Bigr)\,e_t.
\label{eq:contraction}
\end{equation}
For $0 < \eta_t \le 1$, the residual contracts and the per-token loss decreases monotonically: $\Loss_t(S_t) \le \Loss_t(\wt{S}_{t-1})$.
\end{proposition}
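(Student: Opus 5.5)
The plan is a direct computation of the post-update residual, followed by a scalar bound on the contraction factor. Write $S_t = \wt{S}_{t-1} + \beta_t k_t e_t\trans$ with $\beta_t = \eta_t/(\norm{k_t}^2+\eps)$ as in \eqref{eq:kla}. Transposing gives $S_t\trans k_t = \wt{S}_{t-1}\trans k_t + \beta_t e_t (k_t\trans k_t) = \wt{S}_{t-1}\trans k_t + \beta_t\norm{k_t}^2 e_t$. Subtracting from $v_t$ and using the definition $e_t = v_t-\wt{S}_{t-1}\trans k_t$ yields
\[
e_t^+ = e_t - \beta_t\norm{k_t}^2 e_t = \Bigl(1-\eta_t\frac{\norm{k_t}^2}{\norm{k_t}^2+\eps}\Bigr)e_t,
\]
which is \eqref{eq:contraction}. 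This is the same one-line substitution used to derive \eqref{eq:line_search_loss}, with $\tau=\beta_t$, so one could equally cite that identity: $\Loss_t(S(\beta_t)) = \tfrac12(1-\beta_t\norm{k_t}^2)^2\norm{e_t}^2$.

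For the contraction claim, set $\gamma_t = \eta_t\norm{k_t}^2/(\norm{k_t}^2+\eps)$. If $k_t\neq 0$, then $\norm{k_t}^2/(\norm{k_t}^2+\eps)\in(0,1]$ because $\eps\ge 0$. Together with $\eta_t\in(0,1]$ this gives $\gamma_t\in(0,1]$, hence $|1-\gamma_t| = 1-\gamma_t\in[0,1)$. So $\norm{e_t^+}\le\norm{e_t}$, with strict inequality whenever $e_t\neq 0$. Since $\Loss_t(S_t)=\tfrac12\norm{e_t^+}^2$ and $\Loss_t(\wt{S}_{t-1})=\tfrac12\norm{e_t}^2$, squaring gives $\Loss_t(S_t) = (1-\gamma_t)^2\Loss_t(\wt{S}_{t-1})\le \Loss_t(\wt{S}_{t-1})$.

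The only delicate point is degeneracy. If $k_t=0$ and $\eps>0$, then $\gamma_t=0$, so $e_t^+=e_t$ and the inequality holds with equality. If $k_t=0$ and $\eps=0$, then $\beta_t$ is undefined, so I would note that the statement implicitly assumes $\norm{k_t}^2+\eps>0$. With that caveat, the ``monotone'' conclusion should be read as non-strict, with strict decrease exactly when $\gamma_t>0$ and $e_t\neq0$.
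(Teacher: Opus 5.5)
Your proof is correct and follows the paper's argument: substitute the update into $S_t\trans k_t$, read off the scalar factor $1-\eta_t\norm{k_t}^2/(\norm{k_t}^2+\eps)$, bound it using $\eta_t\in(0,1]$ and $\eps\ge 0$, and square to compare the losses. Your handling of the degenerate cases is more careful than the paper's. The paper asserts that the factor lies in $[0,1)$ and that $\norm{e_t^+}<\norm{e_t}$ holds strictly, but both claims fail when $k_t=0$ (the factor is then $1$) or when $e_t=0$.
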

\Cref{eq:contraction} gives the stability condition for the relaxed update. The contraction factor is bounded by the gate $\eta_t$ and regularized by $\eps$, so the update cannot amplify the current residual. The Kaczmarz coefficient depends on $\wt{S}_{t-1}$ and $k_t$, not on the form of the decay operator; therefore \method{} and KDA~\citep{kimiteam2025kimilinearexpressiveefficient} are orthogonal improvements. Proofs are in \cref{app:proofs}.

\subsection{Tokenwise implementation}
\label{sec:impl}

At the token level, \method{} keeps the same decay, residual computation, and rank-one write as GDN. The implementation changes only the coefficient line:
\begin{center}
\small\ttfamily
\fcolorbox{black!20}{black!2}{\begin{minipage}[t]{0.42\textwidth}
\textbf{GDN}\par\medskip
Sbar = alpha * S\par
err  = v - Sbar\^{}T @ k\par
beta = eta\par
S    = Sbar + beta * outer(k, err)
\end{minipage}}
\hfill
\fcolorbox{black!20}{blue!3}{\begin{minipage}[t]{0.44\textwidth}
\textbf{KLA}\par\medskip
Sbar = alpha * S\par
err  = v - Sbar\^{}T @ k\par
beta = eta / (norm(k)**2 + eps)\par
S    = Sbar + beta * outer(k, err)
\end{minipage}}
\end{center}

The left panel of \cref{alg:kla_updates} gives the per-token update, and the right panel gives the matching chunkwise solver. As in GDN, $\eta_t$ is a sigmoid-activated linear projection of the input token. The query $q_t$ is $\ell_2$-normalized before the readout $o_t=S_t\trans q_t/\norm{q_t}$.


\begin{algorithm}[h]
\caption{Tokenwise and chunkwise \method{} computation}
\label{alg:kla_updates}
\small
\begin{minipage}[t]{0.46\textwidth}
\textbf{Tokenwise update (one head)}
\begin{algorithmic}[1]
\Require $S_{t-1}$, $\alpha_t$, $k_t$, $v_t$, $q_t$, $\eta_t$, $\eps$
\State $\wt{S}_{t-1} \gets \alpha_t S_{t-1}$
\State $e_t \gets v_t - \wt{S}_{t-1}\trans k_t$
\State $\beta_t \gets \eta_t/(\norm{k_t}^2+\eps)$
\State $S_t \gets \wt{S}_{t-1}+\beta_t k_t e_t\trans$
\State $o_t \gets S_t\trans q_t/\norm{q_t}$
\State \Return $o_t, S_t$
\end{algorithmic}
\end{minipage}\hfill
\begin{minipage}[t]{0.51\textwidth}
\textbf{Chunkwise kernel (chunk length $C$)}
\begin{algorithmic}[1]
\Require $S_0$, $K,V,Q$, $\{\alpha_i\}$, $\{\eta_i\}$, $\eps$
\For{$i=1$ \textbf{to} $C$}
    \State $\gamma_i \gets \alpha_i\gamma_{i-1}$; $\beta_i \gets \eta_i/(\norm{k_i}^2+\eps)$
\EndFor
\State Form $B=\diag(\beta_1,\ldots,\beta_C)$, $D_\gamma$, $A$, $A^-$
\State Solve $(\I+B(A^-\od KK\trans))U=B(V-D_\gamma K S_0)$
\State $O \gets D_\gamma Q S_0 + (A\od QK\trans)U$
\State $S_{\mathrm{out}} \gets \gamma_C S_0 + K\trans\diag(\gamma_C/\gamma_1,\ldots,1)U$
\State \Return $O, S_{\mathrm{out}}$
\end{algorithmic}
\end{minipage}
\end{algorithm}

\section{Chunkwise Parallelism}
\label{sec:chunk}

\method{} inherits the GDN chunkwise-parallel training algorithm because the Kaczmarz coefficient changes only the scalar multiplying each residual write. With $R_t=S_t\trans\in\R^{d_v\times d_k}$, the update in \eqref{eq:kla} can be written as
\begin{equation}
R_t = \alpha_t R_{t-1}(\I-\beta_t k_t k_t\trans)+\beta_t v_t k_t\trans,
\qquad \beta_t=\frac{\eta_t}{\norm{k_t}^2+\eps}.
\label{eq:transposed}
\end{equation}
This is the GDN recurrence with a different diagonal coefficient. A comparison with representative recurrent and chunkwise-parallel forms is deferred to \cref{tab:recurrent_parallel}; the within-chunk triangular system is derived in \cref{app:chunk}.

For a chunk of length $C$, define stacked matrices $K,V,Q\in\R^{C\times d}$, auxiliary matrix $U\in\R^{C\times d_v}$ with rows $u_j\trans$, $B=\diag(\beta_1,\ldots,\beta_C)$, $D_\gamma=\diag(\gamma_1,\ldots,\gamma_C)$, and $\gamma_i=\prod_{r=1}^i\alpha_r$. Let $A_{ij}=\gamma_i/\gamma_j$ for $j\le i$ and zero otherwise, and let $A^-$ be the strict lower-triangular part of $A$. These are the same objects used by the GDN solver; only $B$ is formed from the Kaczmarz coefficient.

\begin{proposition}[Chunkwise solve]
\label{prop:chunk}
The auxiliary vectors $U$ satisfy the lower-triangular linear system
\begin{equation}
\bigl(\I + B(A^- \od KK\trans)\bigr)\,U = B\bigl(V - D_\gamma K S_0\bigr).
\label{eq:chunk_system}
\end{equation}
The left-hand matrix has unit diagonal and can be solved by forward substitution or the GDN chunkwise kernel.
\end{proposition}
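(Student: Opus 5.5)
The plan is to define the auxiliary vectors explicitly, unroll the recurrence inside the chunk, and then read off \eqref{eq:chunk_system} row by row. The natural choice, consistent with the output formula in \cref{alg:kla_updates}, is to let $u_i$ be the scaled residual actually written at step $i$ of the chunk:
\[
u_i \;=\; \beta_i e_i \;=\; \beta_i\bigl(v_i - \alpha_i S_{i-1}\trans k_i\bigr),
\qquad\text{so that}\qquad
S_i = \alpha_i S_{i-1} + k_i u_i\trans .
\]
With this definition the update \eqref{eq:kla} is purely additive in $u_i$. The coefficient $\beta_i=\eta_i/(\norm{k_i}^2+\eps)$ enters only as a fixed scalar per row. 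This is why the argument is identical to the GDN one and only $B$ changes.

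The first step is to unroll the recurrence by induction on $i$, using $\gamma_i=\prod_{r\le i}\alpha_r$:
\[
S_i = \gamma_i S_0 + \sum_{j\le i}\frac{\gamma_i}{\gamma_j}\,k_j u_j\trans .
\]
Assuming the claim at $i-1$, multiplying by $\alpha_i$ gives the decayed state
\[
\alpha_i S_{i-1} = \gamma_i S_0 + \sum_{j<i}(\gamma_i/\gamma_j)\,k_j u_j\trans .
\]
Adding $k_iu_i\trans$ then closes the induction. At $i=C$ this also yields the formula for $S_{\mathrm{out}}$.

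The second step substitutes this expression for $\alpha_i S_{i-1}$ into the definition of $u_i$:
\[
u_i = \beta_i\Bigl(v_i - \gamma_i S_0\trans k_i - \sum_{j<i} A_{ij}\,(k_i\trans k_j)\,u_j\Bigr).
\]
Moving the sum to the left-hand side and transposing gives the $i$-th row of the system. That row reads $u_i\trans + \beta_i\sum_{j<i}(A^-\od KK\trans)_{ij}u_j\trans = \beta_i\bigl(v_i\trans - \gamma_i k_i\trans S_0\bigr)$. Stacking the rows, and noting that $(D_\gamma K S_0)_{i,:}=\gamma_i k_i\trans S_0$, gives exactly \eqref{eq:chunk_system}.

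Finally, $A^-$ is strictly lower triangular, so $B(A^-\od KK\trans)$ is strictly lower triangular and the left-hand matrix has unit diagonal. It is therefore invertible and solvable by forward substitution in the order $i=1,\dots,C$, which is precisely the GDN kernel with a different $B$.

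The only real obstacle is bookkeeping. One must apply the decay $\alpha_i$ before computing the residual, consistent with $\wt S_{t-1}=\alpha_tS_{t-1}$, so that the ratios $\gamma_i/\gamma_j$ land on the strictly lower part ($j<i$) rather than the diagonal. One must also track transposes between the $S$ and $R$ conventions. Once $u_i$ is fixed as $\beta_i e_i$, the rest is routine.
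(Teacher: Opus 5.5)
Your proof is correct, but it reaches \eqref{eq:chunk_system} by a more direct route than the paper.

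The paper proceeds in three stages. It writes $R_i = P_i R_0 + H_i$ and proves separate WY representations, by two inductions, for the transition product $P_i$ (auxiliary vectors $w_r$) and for the input part $H_i$ (auxiliary vectors $u_r$). It then merges them into $\hat u_r = u_r - R_0\trans w_r$. Only after that does it write the $u$-recurrence ``augmented with the $R_0$ terms'' in matrix form.

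You instead identify the merged vector up front as the scaled residual $u_i=\beta_i e_i$. This makes the update purely additive, $S_i=\alpha_i S_{i-1}+k_iu_i\trans$. One induction then gives $S_i=\gamma_i S_0+\sum_{j\le i}(\gamma_i/\gamma_j)k_ju_j\trans$, and substituting this back into the definition of $u_i$ yields the system row by row.

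Your route has several advantages:
\begin{itemize}
\item It never needs the $w_r$.
\item It gives $U$ an intrinsic meaning.
\item It makes explicit the augmentation step that the paper only gestures at.
\item It avoids the paper's decay-index bookkeeping: the paper's $\gamma_{r\to i}=\gamma_i/\gamma_{r-1}$ has to be replaced by $\gamma_i/\gamma_r$ so that the newest write carries coefficient $1$.
\item The same representation also gives \cref{prop:chunk_output} at once.
\end{itemize}

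What the paper's split buys is a state-independent factor, mirroring how GDN kernels precompute $W$ and $U$ before the inter-chunk pass. Your system recovers this by linearity in its right-hand side. With $L=B(A^-\od KK\trans)$, you have $U=(\I+L)^{-1}BV-(\I+L)^{-1}BD_\gamma K S_0$, where $(\I+L)^{-1}BD_\gamma K$ plays the role of the paper's stacked $w_r\trans$.
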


\begin{proposition}[Chunk outputs and outgoing state]
\label{prop:chunk_output}
The chunk outputs and final state are
\begin{align}
O &= D_\gamma Q S_0 + (A \od QK\trans)\,U, \label{eq:chunk_out} \\
S_{\mathrm{out}} &= \gamma_C S_0 + K\trans \diag\!\bigl(\gamma_C/\gamma_1,\ldots,1\bigr)\,U. \label{eq:chunk_state}
\end{align}
\end{proposition}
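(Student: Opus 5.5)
The plan is to unroll the \method{} recurrence inside one chunk, write every intermediate state in closed form in terms of $S_0$ and the auxiliary vectors $u_j$, and then read off \eqref{eq:chunk_out} and \eqref{eq:chunk_state} row by row. I use the convention of \cref{prop:chunk}, in which the auxiliary vectors are the scaled residuals
\[
u_i=\beta_i e_i=\beta_i\bigl(v_i-\alpha_i S_{i-1}\trans k_i\bigr).
\]
With this choice the update \eqref{eq:kla} becomes $S_i=\alpha_i S_{i-1}+k_i u_i\trans$.

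\textbf{Step 1 (closed form of the intra-chunk states).} By induction on $i$ I will prove
\[
S_i=\gamma_i S_0+\sum_{j\le i}\frac{\gamma_i}{\gamma_j}\,k_j u_j\trans,\qquad i=0,\ldots,C,
\]
with $\gamma_0=1$.
\begin{itemize}
\item The base case $i=0$ is trivial.
\item For the inductive step, multiply the formula for $S_{i-1}$ by $\alpha_i$. This turns $\gamma_{i-1}$ into $\gamma_i$ and each ratio $\gamma_{i-1}/\gamma_j$ into $\gamma_i/\gamma_j$. Adding the new term $k_i u_i\trans$ then supplies the $j=i$ summand, whose coefficient is $\gamma_i/\gamma_i=1$.
\end{itemize}
As a consistency check, substituting $\alpha_i S_{i-1}=\gamma_i S_0+\sum_{j<i}(\gamma_i/\gamma_j)k_j u_j\trans$ into the definition of $u_i$ gives
\[
u_i+\beta_i\sum_{j<i}A_{ij}(k_i\trans k_j)\,u_j=\beta_i\bigl(v_i-\gamma_i S_0\trans k_i\bigr).
\]
Stacked over $i$, this is exactly \eqref{eq:chunk_system}. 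So the $U$ produced by \cref{prop:chunk} is the same $U$ used here.

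\textbf{Step 2 (outputs).} The output is $o_i=S_i\trans q_i$, where $q_i$ is already $\ell_2$-normalized so the readout normalization is absorbed into $Q$. Transposing and applying Step 1 gives
\[
o_i\trans=\gamma_i\,q_i\trans S_0+\sum_{j\le i}A_{ij}\,(q_i\trans k_j)\,u_j\trans .
\]
Stacking these rows over $i$ does the rest:
\begin{itemize}
\item the first term becomes $D_\gamma Q S_0$;
\item the second term is row $i$ of $(A\od QK\trans)U$, because $A$ vanishes above the diagonal and so enforces the causal restriction $j\le i$.
\end{itemize}
This is \eqref{eq:chunk_out}.

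\textbf{Step 3 (outgoing state).} Setting $i=C$ in Step 1 gives
\[
S_{\mathrm{out}}=\gamma_C S_0+\sum_j(\gamma_C/\gamma_j)\,k_j u_j\trans .
\]
A sum of outer products $\sum_j c_j k_j u_j\trans$ equals $K\trans\diag(c_1,\ldots,c_C)\,U$, since the rows of $K$ are the $k_j\trans$. Taking $c_j=\gamma_C/\gamma_j$ yields \eqref{eq:chunk_state}, and the last diagonal entry is $\gamma_C/\gamma_C=1$, as stated.

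The computations are routine. The main obstacle is bookkeeping: the decay must enter through $\gamma_i$ in the $S_0$ term, the ratio $\gamma_i/\gamma_j$ must be inclusive of the diagonal ($A$) for outputs but strict ($A^-$) inside the solve, and $u_i$ must be pinned to $\beta_i e_i$ so that it agrees with \cref{prop:chunk}. I will state these conventions explicitly before the induction.
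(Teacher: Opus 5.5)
Your proof is correct, but it reaches the key closed form by a different and more direct route than the paper. The paper works with the transposed state $R_i$ and unrolls it as $R_i = P_i R_0 + H_i$. It then proves two separate WY representations by induction: one for the product $P_i$ of decayed Householder-like factors (auxiliary vectors $w_r$), and one for the accumulated writes $H_i$ (auxiliary vectors $u_r$). It merges them through $\hat u_r = u_r - R_0\trans w_r$, and only then reads off $O$ and $S_{\mathrm{out}}$, exactly as you do in your Steps 2--3.

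You instead set $u_i := \beta_i e_i$, which turns the delta-rule step into a decayed additive write $S_i = \alpha_i S_{i-1} + k_i u_i\trans$. The merged representation $S_i = \gamma_i S_0 + \sum_{j\le i}(\gamma_i/\gamma_j)\,k_j u_j\trans$ then follows from a one-line induction. You also verify directly that this $U$ satisfies the system of Proposition~\ref{prop:chunk}. That this is the same $U$ the solver returns follows from uniqueness, because the matrix is unit lower-triangular; cite that explicitly.

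What your route buys is brevity and clean bookkeeping:
\begin{itemize}
\item It is written in $S$ rather than $R$. The paper's appendix left-multiplies the $d_v\times d_k$ matrix $R$ by the $d_k\times d_k$ factor $\I-\beta_i k_i k_i\trans$, although \eqref{eq:transposed} places that factor on the right.
\item It fixes the ratio as $\gamma_i/\gamma_j$. The appendix defines $\gamma_{j\to i}=\gamma_i/\gamma_{j-1}$ but later substitutes $\gamma_{r\to i}=\gamma_i/\gamma_r$.
\item It makes the link between the $U$ in Proposition~\ref{prop:chunk_output} and the solution of \eqref{eq:chunk_system} an explicit check, where the paper only says ``augmented with the $R_0$ terms.''
\end{itemize}

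What the paper's WY split buys is the separation of factors that do not depend on $S_0$: $W$ from keys alone, and $U$ from keys and values. GDN-style kernels use this structure to compute intra-chunk quantities in parallel across chunks before the sequential state pass. That matters for implementation, but neither identity in this proposition needs it.
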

\Cref{prop:chunk,prop:chunk_output} are proved in \cref{app:chunk}. They show that \method{} uses the same lower-triangular solve and the same outgoing-state formula as GDN once $B$ is populated with $\eta_i/(\norm{k_i}^2+\eps)$. \Cref{alg:kla_updates} summarizes the tokenwise and chunkwise computations side by side, avoiding two separate algorithm floats.

\section{Experiments}
\label{sec:experiments}

\subsection{Setup}

\paragraph{Pretraining.}
We evaluate whether the Kaczmarz coefficient improves language modeling under fixed compute and fixed architecture.
All models have 0.4B parameters and train on 1B tokens from SlimPajama-672B~\citep{cerebras2023slimpajama} with a 4K context, following the GDN pipeline~\citep{yang2025gated} and standard language-model scaling practice~\citep{hoffmann2022trainingcomputeoptimallargelanguage,touvron2023llamaopenefficientfoundation}.
All models use the same tokenizer, AdamW optimizer, and cosine schedule.

\paragraph{Baselines.}
We compare \method{} with GDN~\citep{yang2025gated}, DeltaNet~\citep{yang2024parallelizing}, GLA~\citep{pmlr-v235-yang24ab}, Longhorn~\citep{liu2024longhornstatespacemodels}, and Mamba2~\citep{dao2024transformersssmsgeneralizedmodels} for pretraining perplexity.
For long-context, synthetic, and efficiency evaluations, we use \method{}, GDN~\citep{yang2025gated}, and Mamba2~\citep{dao2024transformersssmsgeneralizedmodels} as a consistent three-way comparison.
All baselines use the same architecture, parameter count, and training setup.

\paragraph{Evaluation.}
We evaluate three properties: language modeling, controlled state use, and runtime efficiency.
Metrics include validation perplexity from 1K to 65K context without long-context fine-tuning, accuracy on four synthetic tasks, and prefill plus decode metrics from the FLA library~\citep{yang2024fla}.
The synthetic tasks are MQAR (multi-query associative recall), S-NIAH (single needle-in-a-haystack), Palindrome (structured reversal), and Stack (push-pop state tracking).

\subsection{Language Modeling}
\label{sec:lm_results}

The first question is whether replacing GDN's learned scalar with the Kaczmarz coefficient lowers perplexity under the same model size, data, optimizer, and context length.

\paragraph{Pretraining perplexity.}
We hypothesize that a coefficient that solves the local key-value regression problem should improve next-token prediction under the same compute budget.
\Cref{tab:main_result} reports that \method{} reaches 8.09 PPL, outperforming DeltaNet (8.26), GLA (8.42), GDN (8.50), Longhorn (8.79), and Mamba2 (9.41).
The 0.41 PPL improvement over GDN supports the hypothesis that the learned scalar in GDN leaves measurable loss on the table.
\Cref{fig:pretrain} shows the same effect in validation perplexity during pretraining: \method{} converges to a lower final validation PPL than the baselines.

\begin{figure}[h]
\centering
\includegraphics[width=0.72\linewidth]{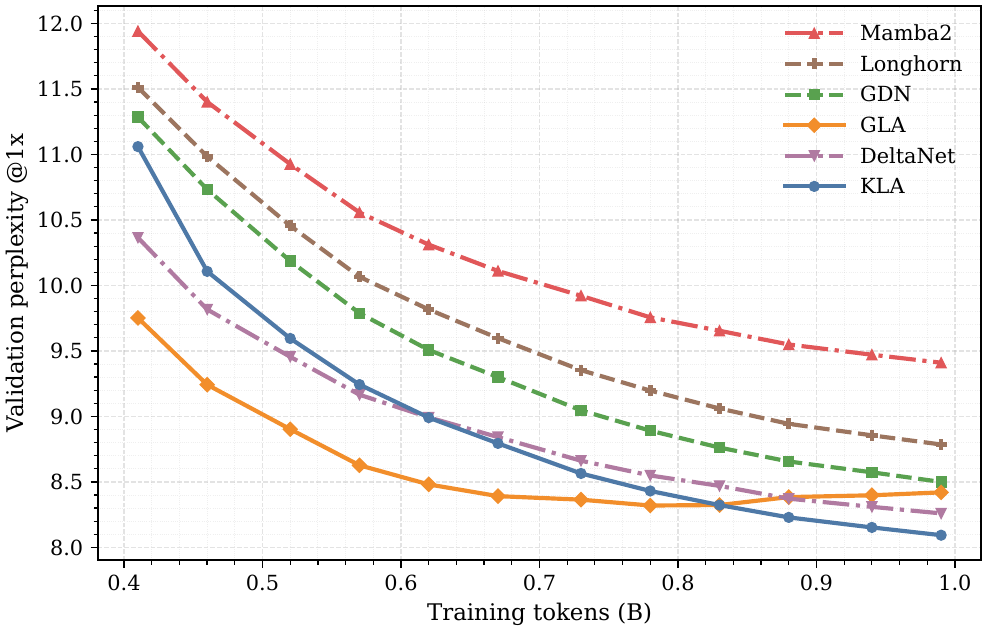}
\caption{Validation perplexity curves during pretraining. \method{} converges to a lower final validation PPL than all baselines.}
\label{fig:pretrain}
\end{figure}

\paragraph{Long-context perplexity.}
We also test whether the coefficient correction remains useful outside the 4K training context.
\Cref{tab:longcontext} reports validation perplexity from 1K to 65K tokens without long-context fine-tuning.
\method{} achieves the lowest perplexity at every tested length, with a 3.7--4.1 PPL advantage over GDN.
This persistent gap supports the interpretation that key-norm-normalized writes reduce state corruption beyond the training length, rather than only improving optimization at 4K tokens.
The tabulated values make the per-length gap explicit without relying on a trend plot.

\begin{table}[h]
\centering
\small
\setlength{\tabcolsep}{5pt}
\renewcommand{\arraystretch}{1.1}
\begin{tabular}{lccccccc}
\toprule
Model & 1K & 2K & 4K & 8K & 16K & 32K & 65K \\
\midrule
\rowhi \textbf{KLA} & \textbf{39.1} & \textbf{39.8} & \textbf{40.0} & \textbf{43.6} & \textbf{45.3} & \textbf{44.5} & \textbf{43.6} \\
GDN    & 42.9 & 43.5 & 43.8 & 47.4 & 49.4 & 48.4 & 47.5 \\
Mamba2 & 49.9 & 50.9 & 52.2 & 56.4 & 58.7 & 57.3 & 56.7 \\
\bottomrule
\end{tabular}
\caption{Long-context validation perplexity ($\downarrow$) by context length, measured in tokens. \method{} achieves the lowest perplexity at every length without long-context fine-tuning.}
\label{tab:longcontext}
\end{table}

\subsection{Synthetic Tasks: Controlled State Use}
\label{sec:synthetic_results}

We use four synthetic tasks to isolate how each model writes to and reads from its recurrent state. All tasks follow the GDN configurations~\citep{yang2025gated}: MQAR is trained at length 256 and evaluated up to $8\times$ length, S-NIAH is evaluated from 1K to 8K context, and Palindrome and Stack are evaluated by sequence length. \Cref{tab:synthetic_extrap} reports length extrapolation, and \cref{fig:synthetic} reports training convergence.

\paragraph{Palindrome.}
Palindrome requires the model to reproduce a random token sequence in reverse order, testing ordered sequence storage rather than key-value association:
\[
\begin{array}{rccccccccc}
\textbf{Input} & O & G & R & S & U & N & E & \langle\mathrm{sep}\rangle & E\;N\;U\;S\;R\;G\;O \\
\textbf{Output} & \phi & \phi & \phi & \phi & \phi & \phi & \phi & \phi & N\;U\;S\;R\;G\;O\;\phi
\end{array}
\]

\paragraph{Multi-Query Associative Recall (MQAR).}
MQAR tests whether a model can retrieve values associated with multiple query keys appearing at different positions in the context. For example, after observing pairs such as $B\mapsto 0$ and $G\mapsto 5$, the model must output the corresponding values when queried:
\[
\begin{array}{rccccccccccccccc}
\textbf{Input} & A & 1 & C & 3 & B & 0 & M & 8 & G & 5 & E & 4 & \langle\mathrm{sep}\rangle & B & G \\
\textbf{Output} & \phi & \phi & \phi & \phi & \phi & \phi & \phi & \phi & \phi & \phi & \phi & \phi & \phi & 0 & 5
\end{array}
\]

\paragraph{S-NIAH.}
Single-needle-in-a-haystack evaluates whether the model can write and retrieve one salient key-value association among long distractor contexts.

\paragraph{Stack.}
Stack simulates LIFO push-pop state tracking. The input contains push operations that insert elements into one of several stacks and pop operations that require the model to output the most recently pushed element for the queried stack. The task stresses repeated local state edits rather than a single retrieval.

\begin{table*}[h]
\centering
\setlength{\tabcolsep}{3pt}
\renewcommand{\arraystretch}{1.08}
\begin{minipage}[t]{0.48\textwidth}
\centering
\textbf{MQAR validation accuracy (\%)}\\[-1pt]
\begin{tabular}{lcccc}
\toprule
Model & 256 & 512 & 1024 & 2048 \\
\midrule
GDN    & 98.74 & 98.36 & 94.31 & \underline{66.81} \\
KLA    & \underline{99.00} & \underline{98.56} & \underline{95.64} & \textbf{73.84} \\
Mamba2 & \textbf{99.08} & \textbf{99.02} & \textbf{96.54} & 64.29 \\
\bottomrule
\end{tabular}
\end{minipage}\hfill
\begin{minipage}[t]{0.48\textwidth}
\centering
\textbf{S-NIAH exact-match accuracy (\%)}\\[-1pt]
\begin{tabular}{lcccc}
\toprule
Model & 1K & 2K & 4K & 8K \\
\midrule
GDN    & \underline{98.85} & \underline{98.25} & \underline{98.05} & \underline{98.50} \\
KLA    & \textbf{100.00} & \textbf{100.00} & \textbf{100.00} & \textbf{100.00} \\
Mamba2 & 0.10 & 0.00 & 0.00 & 0.05 \\
\bottomrule
\end{tabular}
\end{minipage}

\vspace{0.45em}

\begin{minipage}[t]{0.48\textwidth}
\centering
\textbf{Palindrome accuracy (\%)}\\[-1pt]
\begin{tabular}{lcccc}
\toprule
Model & 256 & 512 & 1024 & 2048 \\
\midrule
GDN    & \underline{99.94} & \textbf{99.81} & \underline{98.24} & \textbf{80.74} \\
KLA    & \textbf{99.96} & \underline{99.50} & \textbf{98.33} & \underline{79.96} \\
Mamba2 & 3.20 & 17.92 & 88.22 & 0.86 \\
\bottomrule
\end{tabular}
\end{minipage}\hfill
\begin{minipage}[t]{0.48\textwidth}
\centering
\textbf{Stack accuracy (\%)}\\[-1pt]
\begin{tabular}{lcccc}
\toprule
Model & 256 & 512 & 1024 & 2048 \\
\midrule
GDN    & 99.93 & 99.73 & 98.94 & \underline{95.64} \\
KLA    & \underline{99.97} & \textbf{99.96} & \underline{99.25} & \textbf{96.94} \\
Mamba2 & \textbf{99.98} & \underline{99.94} & \textbf{99.54} & 93.82 \\
\bottomrule
\end{tabular}
\end{minipage}
\caption{Length extrapolation results for synthetic tasks. All values are accuracies (\%). Bold and underline mark the best and second-best values in each column. MQAR is trained at length 256; S-NIAH is evaluated from 1K to 8K context; Palindrome and Stack are evaluated by sequence length.}
\label{tab:synthetic_extrap}
\end{table*}

\begin{figure*}[t]
\centering
\includegraphics[width=0.92\textwidth]{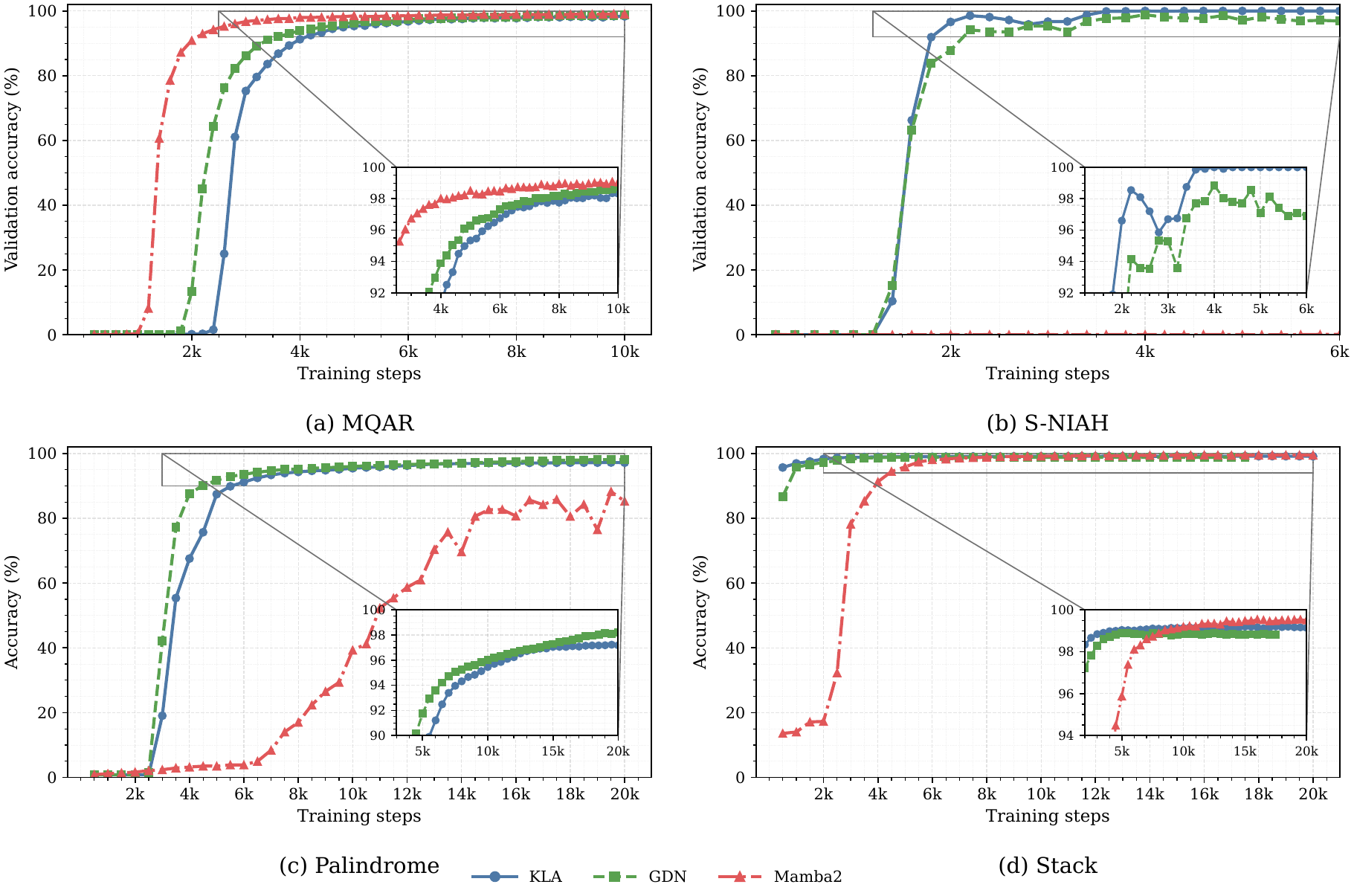}
\caption{Synthetic task training convergence. \Cref{tab:synthetic_extrap} reports length extrapolation. Insets zoom into the high-accuracy regime for each task.}
\label{fig:synthetic}
\end{figure*}

\subsection{Efficiency}
\label{sec:efficiency}

Efficiency tests whether the Kaczmarz coefficient preserves the runtime profile of the GDN chunkwise algorithm.
\method{} should leave prefill unchanged because it uses the same chunkwise triangular solve with a different diagonal coefficient matrix.

\paragraph{Prefill latency.}
\method{} matches GDN prefill latency.
\Cref{fig:efficiency}a shows that, at 131K context, prefill takes 1220~ms for \method{}, 1227~ms for GDN, and 1936~ms for Mamba2.
Computing $\norm{k_t}^2$ and substituting the Kaczmarz coefficient therefore do not change the dominant chunkwise kernel cost.

\paragraph{Decode throughput.}
\method{} improves decode throughput at long context.
At 32K context, \method{} reaches 3.54 tokens/s, which is $2.1\times$ higher than GDN (1.65 tokens/s) and $1.7\times$ higher than Mamba2 (2.10 tokens/s).
\Cref{fig:efficiency}b--c reports throughput and time per output token across context lengths, following standard decoding-efficiency metrics~\citep{shazeer2019fasttransformerdecodingwritehead,ainslie2023gqatraininggeneralizedmultiquery}.
The efficiency result supports the drop-in claim: \method{} changes the write coefficient but not the state layout, recurrent interface, or hardware kernel.

\begin{figure*}[t]
\centering
\begin{subfigure}[t]{0.32\textwidth}
    \centering
    \includegraphics[width=\linewidth]{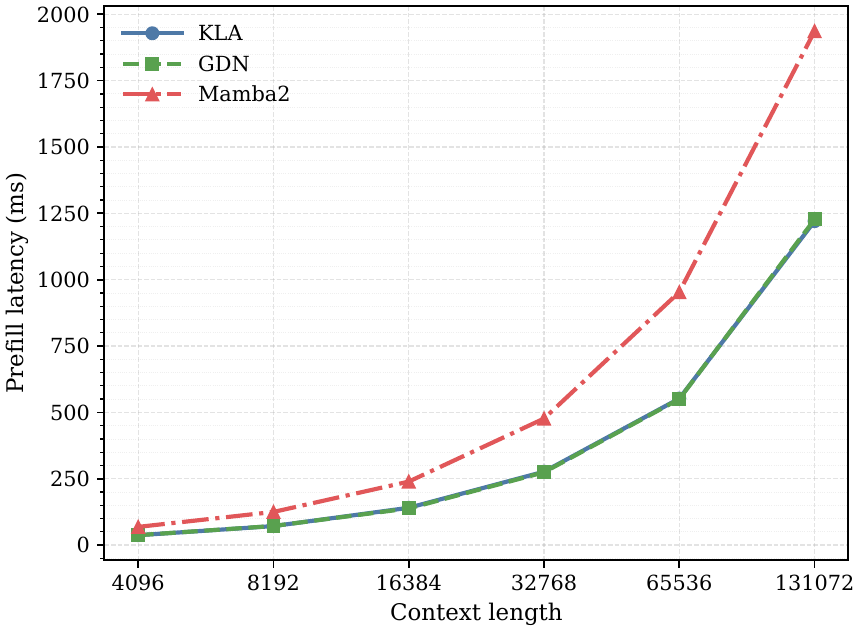}
    \caption{Prefill latency (ms) $\downarrow$}
\end{subfigure}
\hfill
\begin{subfigure}[t]{0.32\textwidth}
    \centering
    \includegraphics[width=\linewidth]{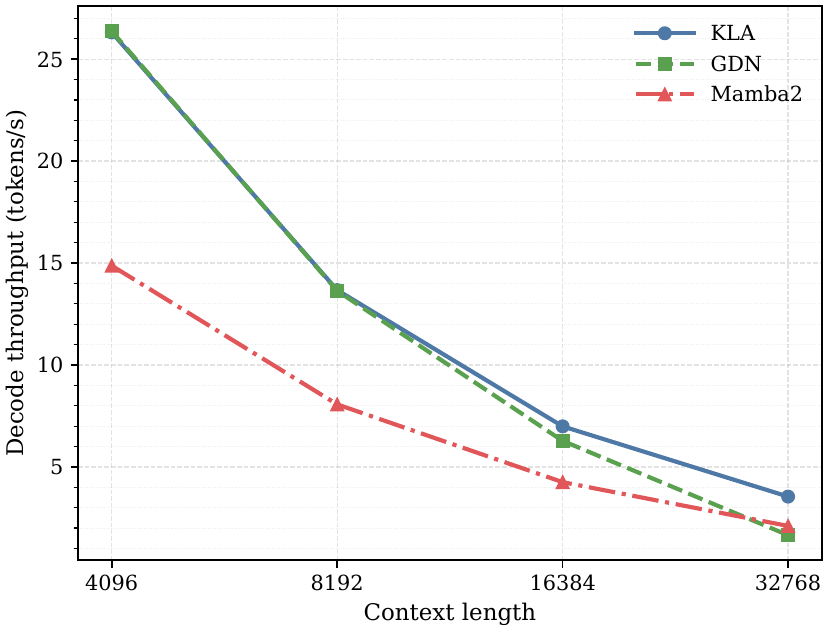}
    \caption{Decode throughput (tok/s) $\uparrow$}
\end{subfigure}
\hfill
\begin{subfigure}[t]{0.32\textwidth}
    \centering
    \includegraphics[width=\linewidth]{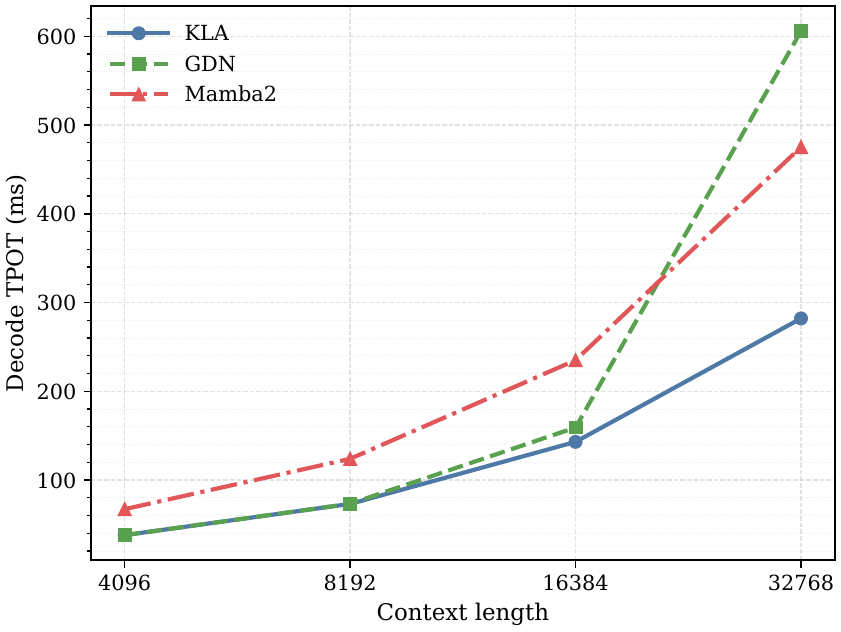}
    \caption{Decode time per output token (TPOT, ms/tok) $\downarrow$}
\end{subfigure}
\caption{Efficiency comparison with batch size 1. \method{} and GDN have nearly identical prefill profiles; \method{} achieves $2.1\times$ higher decode throughput than GDN at 32K context.}
\label{fig:efficiency}
\end{figure*}

\subsection{Ablation Studies}
\label{sec:ablation}

We ablate three design choices: the Kaczmarz normalization, the gating structure, and the value-state expansion ratio. Each variant is evaluated on MQAR at the training length and at $8\times$ extrapolation, and on a 100M-token SlimPajama pretraining run for language-modeling stability. Detailed hyperparameters and variant definitions are in \cref{app:ablation_setup}.

\paragraph{Normalization strategy.}
We compare the default coefficient $\beta_t=\eta_t/(\norm{k_t}^2+\eps)$ with no normalization, key-norm-only normalization, and an unconstrained learned scalar. \Cref{tab:ablations} shows that the full Kaczmarz coefficient gives the best stability trade-off: normalization is required for MQAR, while the learned scalar and key-norm-only variants do not match the pretraining stability of the default update.

\paragraph{Gating mechanism.}
We compare a single-gate variant, the dual-gate GDN reference, and an independent-gate GLA reference. The results test whether correcting the write coefficient is sufficient without the GDN-style gating structure.

\paragraph{State expansion.}
We vary the value expansion ratio $v_{\mathrm{expand}}\in\{2,4,8,16\}$ to test whether additional state capacity replaces coefficient correctness. Larger expansion can improve synthetic memorization, but it does not remove the language-modeling stability trade-off.

\begin{table}[H]
\centering
\small
\setlength{\tabcolsep}{4pt}
\renewcommand{\arraystretch}{1.05}
\begin{tabularx}{\linewidth}{l>{\raggedright\arraybackslash}Xccc}
\toprule
Group & Variant & Test Acc. (\%) & $8\times$ Acc. (\%) & Val. PPL @ 100M $\downarrow$ \\
\midrule
Normalization & \textbf{KLA (Default)} & \textbf{98.39} & 73.84 & \textbf{753.65} \\
Normalization & w/o Normalization & 0.03 & 0.01 & --- \\
Normalization & Key-norm only ($1/\norm{k_t}^2$) & 97.81 & \textbf{77.68} & 3809.71 \\
Normalization & Learned Scalar & 97.77 & 74.46 & 22968.87 \\
\addlinespace[0.25em]
Gating & Single Gate & 0.01 & 0.03 & 2104.61 \\
Gating & \textbf{Dual Gate (GDN reference)} & \textbf{98.72} & \textbf{66.81} & 747.24 \\
Gating & Independent Gate (GLA) & 0.03 & 0.03 & \textbf{356.01} \\
\addlinespace[0.25em]
Expansion & $v_{\mathrm{expand}}=2$ & 98.09 & 71.99 & \textbf{762.33} \\
Expansion & $v_{\mathrm{expand}}=4$ & 98.88 & 91.34 & 1669.35 \\
Expansion & $v_{\mathrm{expand}}=8$ & 95.31 & \textbf{95.37} & 6945.63 \\
Expansion & $v_{\mathrm{expand}}=16$ & \textbf{99.18} & 92.63 & 1383.38 \\
\bottomrule
\end{tabularx}
\caption{Ablation summary. Test and $8\times$ accuracy are measured on MQAR; validation PPL is measured after 100M pretraining tokens.}
\label{tab:ablations}
\end{table}

\paragraph{Related work.}
We provide the full related-work discussion in \cref{app:related}, covering linear attention and fast weights, state-space and hybrid sequence models, gated linear attention, delta-rule and test-time-learning methods, and efficient long-context attention systems.

\section{Limitations}
\label{sec:limitations}

\method{} improves the local write coefficient but retains the capacity limits of a fixed-size recurrent state.
The exact projection result in \cref{thm:projection} holds only when $\eta_t=1$ and $\eps=0$; the practical model trades exactness for numerical stability and adaptive write strength.
In low-precision arithmetic, very small key norms make the choice of $\eps$ important.
The longest Palindrome setting also shows a case where GDN slightly outperforms \method{}, suggesting that structured reversal may need mechanisms beyond key-norm-normalized residual writes.

\section{Conclusion}
\label{sec:conclusion}

\method{} shows that the delta-rule coefficient determines whether a recurrent write solves its local regression problem. Viewing the write as a constrained projection yields the Kaczmarz coefficient, a self-normalized replacement for GDN's learned scalar that becomes an orthogonal projection, exact line-search minimizer, and normalized gradient step in the unrelaxed setting.

Empirically, this coefficient improves 0.4B pretraining PPL by 0.41 over GDN, lowers long-context PPL by 3.7--4.1 across tested lengths, improves MQAR $8\times$ extrapolation by 7.03 points, reaches 100\% on S-NIAH, and preserves GDN-like prefill while improving 32K decode throughput by $2.1\times$. The ablations identify key-norm normalization as necessary for stable pretraining, while Palindrome remains a boundary case for normalized residual writes. Future work should test larger state layouts, diagonal or expanded gating, and larger-scale pretraining regimes.

\bibliographystyle{plainnat}
\bibliography{kla_refs}

\appendix

\section{Proofs for the Tokenwise Theory}
\label{app:proofs}

\subsection{Proof of \texorpdfstring{\Cref{thm:projection}}{Theorem 1}}

Let $\wt{S} = \wt{S}_{t-1}$, $e = e_t$, and $k = k_t$.

\paragraph{Part (ii): exact line search.}
Consider the one-parameter family $S(\tau) = \wt{S} + \tau k e\trans$. Substituting into the per-token loss:
\begin{align}
\Loss_t(S(\tau))
&= \frac{1}{2}\|S(\tau)\trans k - v_t\|_2^2
= \frac{1}{2}\bigl\|\wt{S}\trans k + \tau e\norm{k}^2 - v_t\bigr\|_2^2 \notag \\
&= \frac{1}{2}\bigl(-e + \tau\norm{k}^2 e\bigr)^2
= \frac{1}{2}\bigl(1 - \tau\norm{k}^2\bigr)^2\norm{e}^2.
\end{align}
Setting the derivative to zero: $\tau^* = 1/\norm{k}^2$.

\paragraph{Constraint satisfaction.}
Substituting $\tau^*$:
\begin{align}
S_t\trans k = \wt{S}\trans k + e\frac{k\trans k}{\norm{k}^2} = \wt{S}\trans k + e = v_t.
\end{align}
Hence $S_t \in \Aset_t$.

\paragraph{Part (i): orthogonal projection.}
The tangent space of $\Aset_t$ is $T_{\Aset_t} = \{H \in \R^{d_k \times d_v} : H\trans k = 0\}$. The correction $\Delta = S_t - \wt{S} = ke\trans/\norm{k}^2$. For any $H \in T_{\Aset_t}$:
\begin{align}
\langle \Delta, H \rangle_F
= \Tr\Bigl(\frac{1}{\norm{k}^2}ke\trans H\trans\Bigr)
= \frac{1}{\norm{k}^2} e\trans (H\trans k) = 0,
\end{align}
since $H\trans k = 0$. Therefore $\Delta \perp T_{\Aset_t}$, confirming that $S_t$ is the orthogonal projection of $\wt{S}$ onto $\Aset_t$.

\paragraph{Part (iii): normalized SGD.}
The gradient of $\Loss_t$ at $\wt{S}$ is $\nabla_S \Loss_t\big|_{S=\wt{S}} = k(\wt{S}\trans k - v_t)\trans = -k e\trans$. Therefore:
\[
S_t = \wt{S} - \frac{1}{\norm{k}^2}\nabla_S \Loss_t\big|_{S=\wt{S}},
\]
which is normalized gradient descent with coefficient $1/\norm{k}^2$.

\begin{remark}
By the projection theorem for Hilbert spaces, $S_t$ is the unique element of $\Aset_t$ satisfying $\fro{S_t - \wt{S}} \le \fro{S - \wt{S}}$ for all $S \in \Aset_t$, with equality only when $S = S_t$. The Kaczmarz update makes the minimum Frobenius-norm change to the state.
\end{remark}

\subsection{Alternative derivation via Lagrange multipliers}

The Lagrangian of~\eqref{eq:constrained} is $\mathcal{J}(S, \lambda) = \frac{1}{2}\fro{S - \wt{S}}^2 + \lambda\trans(S\trans k - v_t)$ with $\lambda \in \R^{d_v}$. Setting $\partial \mathcal{J}/\partial S = 0$:
\[
S - \wt{S} + k\lambda\trans = 0 \implies S = \wt{S} - k\lambda\trans.
\]
Substituting into $S\trans k = v_t$:
\begin{align}
(\wt{S} - k\lambda\trans)\trans k = v_t \implies \lambda = \frac{\wt{S}\trans k - v_t}{\norm{k}^2} = -\frac{e}{\norm{k}^2}.
\end{align}
Therefore $S_t = \wt{S} + ke\trans/\norm{k}^2$, recovering the Kaczmarz update. \qed

\subsection{Proof of \texorpdfstring{\Cref{prop:proximal}}{Proposition 1}}

The proximal problem is:
\[
\minimize_S \; \frac{1}{2}\fro{S-\wt{S}}^2 + \frac{\mu_t}{2}\|S\trans k - v_t\|_2^2.
\]
Setting the gradient to zero: $S - \wt{S} + \mu_t k(S\trans k - v_t)\trans = 0$.
Substituting $S = \wt{S} + \tau k e\trans$:
\begin{align}
\tau k e\trans + \mu_t k\bigl(\tau\norm{k}^2 e - e\bigr)\trans &= 0 \notag \\
k\,e\trans\bigl(\tau + \mu_t\tau\norm{k}^2 - \mu_t\bigr) &= 0.
\end{align}
Solving: $\tau = \mu_t/(1 + \mu_t\norm{k}^2)$. Substituting $\mu_t = \eta_t / ((1-\eta_t)\norm{k}^2 + \eps)$:
\begin{align}
\tau = \frac{\eta_t}{(1-\eta_t)\norm{k}^2+\eps+\eta_t\norm{k}^2} = \frac{\eta_t}{\norm{k}^2 + \eps},
\end{align}
which is exactly the \method{} coefficient. \qed

\subsection{Proof of \texorpdfstring{\Cref{prop:contraction}}{Proposition 2}}

Starting from~\eqref{eq:kla}:
\begin{align}
S_t\trans k
&= \wt{S}\trans k + \frac{\eta_t}{\norm{k}^2+\eps}e\,(k\trans k)
= \wt{S}\trans k + \eta_t\frac{\norm{k}^2}{\norm{k}^2+\eps}e.
\end{align}
Therefore:
\begin{align}
e_t^+
= v_t - S_t\trans k
= e - \eta_t\frac{\norm{k}^2}{\norm{k}^2+\eps}e
= \Bigl(1 - \eta_t\frac{\norm{k}^2}{\norm{k}^2+\eps}\Bigr)e.
\end{align}
For $\eta_t \in (0,1]$ and $\eps \ge 0$, the factor lies in $[0,1)$, so $\|e_t^+\|_2 < \|e_t\|_2$. Squaring both sides gives $\Loss_t(S_t) \le \Loss_t(\wt{S}_{t-1})$. \qed

\subsection{Decay-agnostic remark}

Let $D_t$ be any linear operator and $\wt{S} = D_t S_{t-1}$. The projection problem $\min_{S \in \Aset_t}\fro{S-\wt{S}}$ depends only on the base point $\wt{S}$ and the constraint $\Aset_t$. The derivation of \cref{thm:projection} uses only $e = v_t - \wt{S}\trans k$ and $\norm{k}^2$, making no assumption about how $\wt{S}$ was produced from $S_{t-1}$. The Kaczmarz coefficient applies regardless of whether $D_t = \alpha_t \I$, $D_t = \diag(\alpha_t)$, or any other linear decay.

\section{Derivations for Chunkwise Parallelism}
\label{app:chunk}

\Cref{tab:recurrent_parallel} summarizes the recurrent and chunkwise-parallel forms referenced in \cref{sec:chunk}.

\begin{table*}[t]
\centering
\scriptsize
\setlength{\tabcolsep}{3.2pt}
\renewcommand{\arraystretch}{1.12}
\resizebox{\textwidth}{!}{%
\begin{tabular}{lll}
\toprule
Method & Recurrent form & Chunkwise-parallel form \\
\midrule
Softmax attention & $\sum_{j\le t}\exp(q_t\trans k_j)\,v_j$ & $(\exp(QK\trans)\od M)V$ \\
Linear attention  & $\sum_{j\le t}(q_t\trans k_j)\,v_j$ & $(QK\trans\od M)V$ \\
RetNet            & $\sum_{j\le t}q_t\trans\!\bigl(\prod_{s=j+1}^{t}\alpha_s\bigr)k_j\,v_j$ & $(QK\trans\od A\od M)V$ \\
Mamba2            & $\sum_{j\le t}q_t\trans\!\bigl(\prod_{s=j+1}^{t}\alpha_s\bigr)k_j\,v_j$ & $(QK\trans\od A\od M)V$ \\
GLA               & $\sum_{j\le t}q_t\trans\!\bigl(\prod_{s=j+1}^{t}\diag(\alpha_s)\bigr)k_j\,v_j$ & $((Q\od\Gamma)(K/\Gamma)\trans\od M)V$ \\
DeltaNet          & $\sum_{j\le t}q_t\trans\!\bigl(\prod_{s=j+1}^{t}(\I-\beta_s k_s k_s\trans)\bigr)k_j\,v_j$ & $(QK\trans\od M)\,\mathcal{T}_\beta(V)$ \\
GDN               & $\sum_{j\le t}q_t\trans\!\bigl(\prod_{s=j+1}^{t}\alpha_s(\I-\beta_s k_s k_s\trans)\bigr)k_j\,v_j$ & $(QK\trans\od A\od M)\,\mathcal{T}_\beta(V)$ \\
\rowhi \textbf{KLA} & $\sum_{j\le t}q_t\trans\!\bigl(\prod_{s=j+1}^{t}\alpha_s(\I-\kmark{\beta_s} k_s k_s\trans)\bigr)k_j\,v_j$ & $(QK\trans\od A\od M)\,\mathcal{T}_{\kmark{\mathrm{kacz}}}(V)$ \\
\bottomrule
\end{tabular}}
\caption{Recurrent and chunkwise-parallel forms. $M$ is the causal mask; $A$ and $\Gamma$ are accumulated decay factors. \method{} has the same recurrent form as GDN; only the coefficient $\beta_t = \kmark{\eta_t/(\norm{k_t}^2+\eps)}$ differs.}
\label{tab:recurrent_parallel}
\end{table*}

We first recall the chunk-level recurrence. Consider a chunk of length $C$ with local indices $i = 1,\ldots,C$. The incoming transposed state is $R_0 = S_0\trans \in \R^{d_v \times d_k}$. From \cref{eq:transposed}:
\begin{equation*}
\begin{aligned}
R_i &= \underbrace{\Bigl(\prod_{r=1}^{i}\alpha_r(\I - \beta_r k_r k_r\trans)\Bigr)}_{:=\,P_i}\cdot R_0
\;+\;\underbrace{\sum_{j=1}^{i}\Bigl(\prod_{r=j+1}^{i}\alpha_r(\I - \beta_r k_r k_r\trans)\Bigr)\cdot\beta_j v_j k_j\trans}_{:=\,H_i}
\\[4pt]
&= P_i \cdot R_0 + H_i
\end{aligned}
\end{equation*}
where $\beta_i = \eta_i/(\norm{k_i}^2 + \eps)$. Our goal is to transform $P_i$ and $H_i$ into forms suitable for parallel computation.

Define the cumulative decay $\gamma_i = \prod_{r=1}^{i}\alpha_r$ ($\gamma_0 = 1$) and the cumulative decay from position $j$ to position $i$:
\[
\gamma_{j \to i} = \prod_{r=j}^{i}\alpha_r = \frac{\gamma_i}{\gamma_{j-1}}.
\]

We show that $P_i$, a product of scalar-decayed Householder-like matrices, admits a WY representation.

\begin{proposition}[WY representation of $P_i$]
\label{prop:wy_P}
The matrix $P_i$ can be expressed as:
\begin{equation}
P_i = \gamma_i \I - \sum_{r=1}^{i}\gamma_{r \to i}\, k_r\, w_r\trans
\label{eq:P_wy}
\end{equation}
where the auxiliary vector $w_r \in \R^{d_k}$ is computed via the recurrence:
\begin{equation}
w_i = \beta_i\Bigl(\gamma_{i \to i}\, k_i - \sum_{r=1}^{i-1}w_r\bigl(k_r\trans \gamma_{r \to i}\, k_i\bigr)\Bigr).
\label{eq:w_recursion}
\end{equation}
\end{proposition}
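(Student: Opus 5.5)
The plan is to argue by induction on $i$, treating the WY form \eqref{eq:P_wy} as an invariant preserved when one more scalar-decayed Householder factor $\alpha_i(\I-\beta_i k_i k_i\trans)$ is appended. The base case is $i=0$ with $P_0=\I$ and an empty sum; equivalently, check $i=1$ directly. We have $P_1=\alpha_1\I-\alpha_1\beta_1 k_1k_1\trans$, which matches $\gamma_1\I-\gamma_{1\to1}k_1w_1\trans$ with $w_1=\beta_1 k_1$. This base case already fixes how the decay factors must be split between $\gamma_{r\to i}$ and $w_r$.

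For the inductive step, first settle the ordering of the product. Unrolling \eqref{eq:transposed} gives $R_i=R_0\,\alpha_1(\I-\beta_1k_1k_1\trans)\cdots\alpha_i(\I-\beta_ik_ik_i\trans)$, so the state is multiplied on the right. Each factor is symmetric, so $P_i$ and $P_i\trans$ are the same product in reverse order. I will work with the ordering $P_i=\alpha_i(\I-\beta_ik_ik_i\trans)P_{i-1}$, which keeps the $k_r$ vectors on the left. Substituting the hypothesis $P_{i-1}=\gamma_{i-1}\I-\sum_{r<i}\gamma_{r\to i-1}k_rw_r\trans$ and expanding gives four terms.
\begin{itemize}
\item The term $\gamma_i\I$.
\item The term $-\gamma_i\beta_i k_ik_i\trans$.
\item The old sum, rescaled by $\alpha_i$. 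Since $\alpha_i\gamma_{r\to i-1}=\gamma_{r\to i}$, this becomes $-\sum_{r<i}\gamma_{r\to i}k_rw_r\trans$.
\item The cross term $\beta_i k_i\sum_{r<i}\gamma_{r\to i}(k_i\trans k_r)w_r\trans$.
\end{itemize}
The second and fourth terms share the left factor $k_i$. Together they equal $-k_i\,\beta_i\bigl(\gamma_i k_i-\sum_{r<i}w_r(k_r\trans\gamma_{r\to i}k_i)\bigr)\trans$, which is exactly the new rank-one term $k_iw_i\trans$ with $w_i$ given by a recursion of the shape \eqref{eq:w_recursion}. The old terms keep their form, with coefficients advanced from $\gamma_{r\to i-1}$ to $\gamma_{r\to i}$. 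This closes the induction.

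The main obstacle is the bookkeeping of decay factors, not the algebra. The expansion naturally produces $\gamma_i k_i$ inside $w_i$, while the new term in \eqref{eq:P_wy} carries the prefactor $\gamma_{i\to i}=\alpha_i$. The statement writes $\gamma_{i\to i}k_i$ inside the recursion, so I must check that these conventions agree. If they do not, I will absorb the discrepancy into the definition of $w_i$. One option is to define $\gamma_{r\to i}$ with the lower index $r+1$, i.e.\ decay applied after the write, as in the recurrent form of \cref{tab:recurrent_parallel}. Another is to first prove the undecayed DeltaNet identity $\prod_r(\I-\beta_rk_rk_r\trans)=\I-\sum_rk_rw_r\trans$ and then reinsert the scalar decays, which commute with everything. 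I would adopt whichever convention makes the base case $i=1$ consistent, and state it explicitly.

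The final check is that the ordering chosen for $P_i$ (left versus right multiplication) is consistent with how $P_i$ acts on $R_0$. If it is not, transpose the whole identity, using the symmetry of each Householder-like factor, so that $k_r$ and $w_r$ swap sides.
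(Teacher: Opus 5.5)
Your four-term expansion of $\alpha_i(\I-\beta_ik_ik_i\trans)P_{i-1}$ is correct, and it is the route the paper takes. But the induction does not close for the formula as stated. The $i=1$ check you deferred shows why: the proposition is false as written.

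With the paper's convention $\gamma_{j\to i}=\prod_{r=j}^{i}\alpha_r$ (so $\gamma_{i\to i}=\alpha_i$), \eqref{eq:w_recursion} gives $w_1=\alpha_1\beta_1k_1$, not the $w_1=\beta_1k_1$ you used. Then \eqref{eq:P_wy} yields $P_1=\alpha_1\I-\alpha_1^2\beta_1k_1k_1\trans$. With the ``decay after the write'' convention $\gamma_{j\to i}=\prod_{s=j+1}^{i}\alpha_s$, one gets $\gamma_{1\to1}=1$ and $P_1=\alpha_1\I-\beta_1k_1k_1\trans$. Neither equals $\alpha_1(\I-\beta_1k_1k_1\trans)$ when $\alpha_1\in(0,1)$ and $\beta_1k_1\neq0$. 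So your plan to ``adopt whichever convention makes the base case consistent'' cannot succeed; the recursion itself must change.

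The broken step in your inductive argument is the sentence ``which is exactly the new rank-one term $k_iw_i\trans$''. Equation \eqref{eq:P_wy} requires the new term to be $\gamma_{i\to i}k_iw_i\trans$, while your combined second-and-fourth term carries $\gamma_ik_i$, not $\gamma_{i\to i}k_i$, inside. The paper's proof makes the same slip: it writes $\alpha_i\gamma_{i-1}$ as $\gamma_{i\to i}$, then silently drops the prefactor $\gamma_{i\to i}$ on $k_iw_i\trans$. It also omits the base case, which is where the error would have surfaced.

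To get a proof, commit to a corrected statement. Any of the following works.
\begin{itemize}
\item Keep the paper's convention and pull out $\gamma_{i\to i}=\alpha_i$ using $\gamma_i=\alpha_i\gamma_{i-1}$ and $\gamma_{r\to i}=\alpha_i\gamma_{r\to i-1}$. This proves \eqref{eq:P_wy} with $w_i=\beta_i\bigl(\gamma_{i-1}k_i-\sum_{r<i}w_r\,(k_r\trans\gamma_{r\to i-1}k_i)\bigr)$, which agrees with your $w_1=\beta_1k_1$.
\item Use $\gamma_{r\to i}=\prod_{s=r+1}^{i}\alpha_s$, which the paper itself switches to in its combined-WY and chunk-output derivations. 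Your inductive step then goes through verbatim, with $\gamma_ik_i$ in place of $\gamma_{i\to i}k_i$ in the recursion, but the base case becomes $w_1=\alpha_1\beta_1k_1$.
\item Use your second fallback, which is the cleanest. Since the scalar decays commute with everything, $P_i=\gamma_i\bigl(\I-\sum_{r\le i}k_rw_r\trans\bigr)$ with the undecayed DeltaNet vectors $w_i=\beta_i\bigl(k_i-\sum_{r<i}w_r\,(k_r\trans k_i)\bigr)$. The two decayed forms above follow by multiplying these $w_r$ by $\gamma_{r-1}$ and $\gamma_r$, respectively.
\end{itemize}
Your ordering remark is also warranted. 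The transposed recurrence multiplies $R_0$ on the right, so the homogeneous part is $R_0P_i\trans$ (equivalently $P_iS_0$), not $P_iR_0$.
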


\begin{proof}
We proceed by induction on $i$.

\textbf{Inductive step.}
Assume $P_{i-1} = \gamma_{i-1}\I - \sum_{r=1}^{i-1}\gamma_{r \to i-1}\,k_r\,w_r\trans$.
We derive:
\begin{align*}
P_i
&= \alpha_i(\I - \beta_i k_i k_i\trans)\; P_{i-1} \\
&= \alpha_i(\I - \beta_i k_i k_i\trans)\;\Bigl(\gamma_{i-1}\I - \sum_{r=1}^{i-1}\gamma_{r \to i-1}\,k_r\,w_r\trans\Bigr) \\
&= \alpha_i\Bigl(\gamma_{i-1}\I - \sum_{r=1}^{i-1}\gamma_{r \to i-1}\,k_r\,w_r\trans\Bigr) - \alpha_i\beta_i k_i k_i\trans\Bigl(\gamma_{i-1}\I - \sum_{r=1}^{i-1}\gamma_{r \to i-1}\,k_r\,w_r\trans\Bigr) \\
&= \gamma_i \I - \sum_{r=1}^{i-1}\gamma_{r \to i}\,k_r\,w_r\trans
   - \beta_i k_i\Bigl(\gamma_{i \to i}\, k_i - \sum_{r=1}^{i-1}w_r\,(k_r\trans \gamma_{r \to i}\, k_i)\Bigr)\trans \\
&= \gamma_i \I - \sum_{r=1}^{i-1}\gamma_{r \to i}\,k_r\,w_r\trans
   - k_i\;\underbrace{\beta_i\Bigl(\gamma_{i \to i}\, k_i - \sum_{r=1}^{i-1}w_r\,(k_r\trans\gamma_{r \to i}\, k_i)\Bigr)\trans}_{w_i\trans} \\
&= \gamma_i \I - \sum_{r=1}^{i-1}\gamma_{r \to i}\,k_r\,w_r\trans - k_i\,w_i\trans \\
&= \gamma_i \I - \sum_{r=1}^{i}\gamma_{r \to i}\,k_r\,w_r\trans
\end{align*}
where in the fourth line we used $\alpha_i\gamma_{i-1} = \gamma_i$ and $\alpha_i\gamma_{r \to i-1} = \gamma_{r \to i}$, and in the fifth line we factored out $k_i$ and identified the underbraced expression as~\eqref{eq:w_recursion}.
\end{proof}

Similarly, $H_i$ can be expressed in a parallelizable form.

\begin{proposition}[WY representation of $H_i$]
\label{prop:wy_H}
The matrix $H_i$ can be expressed as:
\begin{equation}
H_i = \sum_{r=1}^{i}\gamma_{r \to i}\, k_r\, u_r\trans
\label{eq:H_wy}
\end{equation}
where the auxiliary vector $u_r \in \R^{d_v}$ is computed via the recurrence:
\begin{equation}
u_i = \beta_i\Bigl(v_i - \sum_{r=1}^{i-1}u_r\bigl(k_r\trans \gamma_{r \to i}\, k_i\bigr)\Bigr).
\label{eq:u_recursion}
\end{equation}
\end{proposition}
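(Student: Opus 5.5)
The plan mirrors the inductive argument used for \Cref{prop:wy_P}, applied to the inhomogeneous part of the chunk recurrence. In the orientation used in that proof, $H_i$ satisfies the one-step recurrence
\[
H_i = \alpha_i(\I-\beta_i k_i k_i\trans)\,H_{i-1} + \beta_i k_i v_i\trans, \qquad H_0 = 0.
\]
This follows by splitting off the $j=i$ term of the defining sum, which carries an empty product, and factoring $\alpha_i(\I-\beta_i k_i k_i\trans)$ out of the remaining terms. I will prove \eqref{eq:H_wy} by induction on $i$, with the recursion \eqref{eq:u_recursion} identified from the coefficient of $k_i$, exactly as $w_i$ was identified in \eqref{eq:w_recursion}.

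For the inductive step, assume $H_{i-1}=\sum_{r=1}^{i-1}\gamma_{r\to i-1}k_r u_r\trans$ and substitute into the recurrence. The term $\alpha_i H_{i-1}$ gives $\sum_{r<i}\gamma_{r\to i}k_r u_r\trans$, using $\alpha_i\gamma_{r\to i-1}=\gamma_{r\to i}$. The correction term gives
\[
-\alpha_i\beta_i k_i k_i\trans H_{i-1} = -\beta_i k_i\sum_{r<i} u_r\trans\bigl(k_i\trans\gamma_{r\to i}k_r\bigr).
\]
Adding the fresh write $\beta_i k_i v_i\trans$ collects everything multiplying $k_i$ into
\[
k_i\Bigl[\beta_i\Bigl(v_i-\sum_{r<i}u_r(k_r\trans\gamma_{r\to i}k_i)\Bigr)\Bigr]\trans = k_i\,u_i\trans,
\]
by \eqref{eq:u_recursion}. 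The scalars $k_r\trans k_i$ commute freely, so there are no ordering subtleties. This yields the new $r=i$ term of the sum.

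The main obstacle is bookkeeping of the decay at the diagonal term, $r=i$, and in the base case. Since $\gamma_{r\to i}=\prod_{s=r}^i\alpha_s$ includes $\alpha_r$, the formula assigns the fresh write a weight $\gamma_{i\to i}=\alpha_i$. The recurrence, however, adds $\beta_i k_i v_i\trans$ undecayed, so $H_1=\beta_1k_1v_1\trans$, not $\alpha_1\beta_1 k_1v_1\trans$.

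I would resolve this by fixing the convention consistently before running the induction. Either read the weight on $k_ru_r\trans$ as $\gamma_i/\gamma_r=\gamma_{r+1\to i}$, which is exactly the $A_{ij}=\gamma_i/\gamma_j$ of \Cref{sec:chunk}. Or keep $\gamma_{r\to i}$ and absorb a factor $1/\alpha_r$ into $u_r$, which rescales \eqref{eq:u_recursion} by a diagonal factor without changing its lower-triangular shape. In either case, one checks the base case $i=1$ directly and verifies that the inner-product coefficients $k_r\trans(\cdot)k_i$ carry the matching decay. Once this is settled, stacking the $u_i$ into $U$ should turn \eqref{eq:u_recursion} into the triangular system of \Cref{prop:chunk}.
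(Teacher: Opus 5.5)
Your induction is the same as the paper's, and the diagonal-term problem you flag is real. The paper's inductive step reaches $\sum_{r<i}\gamma_{r\to i}k_ru_r\trans + k_iu_i\trans$ exactly as yours does, then rewrites it as $\sum_{r\le i}\gamma_{r\to i}k_ru_r\trans$. That tacitly takes $\gamma_{i\to i}=1$, although its definition gives $\gamma_{i\to i}=\alpha_i$, and the paper never checks a base case, where $H_1=\beta_1k_1v_1\trans$ already contradicts the printed formula unless $\alpha_1=1$.

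Either of your repairs works. The first uses weights $\gamma_i/\gamma_r$ both on $k_ru_r\trans$ and inside the recursion. This is the convention the paper actually uses downstream: it substitutes $\gamma_{r\to i}=\gamma_i/\gamma_r$ and builds $L=B(A^-\odot KK\trans)$ with $A^-_{ij}=\gamma_i/\gamma_j$ for \Cref{prop:chunk}. The second is the minimal edit to the statement: keep $\gamma_{r\to i}$ but replace the prefactor $\beta_i$ in \eqref{eq:u_recursion} by $\beta_i/\alpha_i$. Commit to one of them and state the corrected proposition explicitly before running the induction.
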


\begin{proof}
We again use induction on $i$.

\textbf{Inductive step.}
Assume $H_{i-1} = \sum_{r=1}^{i-1}\gamma_{r \to i-1}\,k_r\,u_r\trans$.
\begin{align*}
H_i
&= \alpha_i(\I - \beta_i k_i k_i\trans)\;H_{i-1} + \beta_i v_i k_i\trans \\
&= \alpha_i(\I - \beta_i k_i k_i\trans)\;\Bigl(\sum_{r=1}^{i-1}\gamma_{r \to i-1}\,k_r\,u_r\trans\Bigr) + \beta_i v_i k_i\trans \\
&= \alpha_i \sum_{r=1}^{i-1}\gamma_{r \to i-1}\,k_r\,u_r\trans
   - \alpha_i\beta_i k_i k_i\trans \sum_{r=1}^{i-1}\gamma_{r \to i-1}\,k_r\,u_r\trans
   + \beta_i v_i k_i\trans \\
&= \sum_{r=1}^{i-1}\gamma_{r \to i}\,k_r\,u_r\trans
   - \beta_i k_i \Bigl(\sum_{r=1}^{i-1}\bigl(k_r\trans\gamma_{r \to i}\, k_i\bigr)\,u_r\Bigr)\trans
   + \beta_i v_i k_i\trans \\
&= \sum_{r=1}^{i-1}\gamma_{r \to i}\,k_r\,u_r\trans
   + k_i\;\underbrace{\beta_i\Bigl(v_i - \sum_{r=1}^{i-1}u_r\bigl(k_r\trans\gamma_{r \to i}\, k_i\bigr)\Bigr)\trans}_{u_i\trans} \\
&= \sum_{r=1}^{i-1}\gamma_{r \to i}\,k_r\,u_r\trans + k_i\,u_i\trans \\
&= \sum_{r=1}^{i}\gamma_{r \to i}\,k_r\,u_r\trans
\end{align*}
where in the fourth line we used $\alpha_i\gamma_{r \to i-1} = \gamma_{r \to i}$ and rearranged transposes, and in the fifth line we identified the underbraced expression as~\eqref{eq:u_recursion}.
\end{proof}

\subsection{Combined WY representation}

Combining Propositions~\ref{prop:wy_P} and~\ref{prop:wy_H}, the state at position $i$ is:
\begin{equation}
R_i = P_i \cdot R_0 + H_i = \gamma_i R_0 - \sum_{r=1}^{i}\gamma_{r \to i}\,k_r\,(R_0\trans w_r)\trans + \sum_{r=1}^{i}\gamma_{r \to i}\,k_r\, u_r\trans = \gamma_i R_0 + \sum_{r=1}^{i}\gamma_{r \to i}\,k_r\,\hat{u}_r\trans,
\label{eq:combined_wy}
\end{equation}
where $\hat{u}_r = u_r - R_0\trans w_r$ combines both auxiliary vectors. In practice, it is more efficient to absorb the $R_0$ terms directly into the $u$ recurrence. Substituting $\gamma_{r \to i} = \gamma_i/\gamma_r$ when $\gamma_r > 0$ and defining $\bar{u}_r := \hat{u}_r$ gives the representation underlying the chunkwise system in \cref{prop:chunk}.

\subsection{Matrix form (\texorpdfstring{\Cref{prop:chunk}}{Proposition 3} proof)}

Stack all vectors row-wise:
\[
K = \begin{bmatrix}k_1\trans\\\vdots\\k_C\trans\end{bmatrix}\in\R^{C\times d_k},\quad
V = \begin{bmatrix}v_1\trans\\\vdots\\v_C\trans\end{bmatrix}\in\R^{C\times d_v},\quad
U = \begin{bmatrix}u_1\trans\\\vdots\\u_C\trans\end{bmatrix}\in\R^{C\times d_v}.
\]
The Gram matrix is $G = KK\trans \in \R^{C \times C}$ with $G_{ij} = k_i\trans k_j$. For the row corresponding to position $i$, the incoming-state prediction is $(\gamma_i R_0 k_i)\trans = (D_\gamma KR_0\trans)_i$, where $D_\gamma = \diag(\gamma_1,\ldots,\gamma_C)$.

Define the strict lower-triangular matrix:
\[
L_{ij} = \begin{cases} \beta_i\,\dfrac{\gamma_i}{\gamma_j}\,G_{ij}, & j < i, \\ 0, & j \ge i. \end{cases}
\]
Using the causal decay matrix $A^-_{ij} = \gamma_i/\gamma_j$ for $j < i$ (zero otherwise), we have $L = B(A^- \od G)$ with $B = \diag(\beta_1,\ldots,\beta_C)$.

The $u$-recurrence~\eqref{eq:u_recursion} (augmented with the $R_0$ terms from~\eqref{eq:combined_wy}) in matrix form reads:
\[
U = B(V - D_\gamma KR_0\trans) - LU,
\]
which rearranges to:
\[
(\I + L)\,U = B(V - D_\gamma KR_0\trans).
\]
Since $R_0 = S_0\trans$ and $L = B(A^- \od KK\trans)$, this is \cref{eq:chunk_system}. The matrix $\I + L$ is unit lower-triangular (its diagonal entries are all 1, since $L$ is strictly lower-triangular), so it is always nonsingular and is solved by forward substitution. \qed

\subsection{Chunk outputs and outgoing state (\texorpdfstring{\Cref{prop:chunk_output}}{Proposition 4} proof)}

\paragraph{Token outputs.}
The output at position $i$ is $o_i = R_i q_i$. Substituting the combined WY representation~\eqref{eq:combined_wy}:
\begin{align*}
o_i &= \gamma_i R_0 q_i + \sum_{j=1}^{i}\frac{\gamma_i}{\gamma_j}\,u_j\,(k_j\trans q_i).
\end{align*}
Stacking all $i = 1,\ldots,C$:
\[
O = D_\gamma Q S_0 + (A \od QK\trans)\,U,
\]
where $A_{ij} = \gamma_i/\gamma_j$ for $j \le i$ (the full lower-triangular version including the diagonal). This is \cref{eq:chunk_out}. The first term is the readout from the incoming state after decay; the second is the contribution of updates within the chunk.

\paragraph{Outgoing state.}
Setting $i = C$ in the combined representation and transposing back:
\begin{align*}
S_{\mathrm{out}} = R_C\trans
&= \Bigl(\gamma_C R_0 + \sum_{j=1}^{C}\frac{\gamma_C}{\gamma_j}\,u_j\, k_j\trans\Bigr)\trans \\
&= \gamma_C S_0 + \sum_{j=1}^{C}\frac{\gamma_C}{\gamma_j}\,k_j\, u_j\trans \\
&= \gamma_C S_0 + K\trans \diag\!\Bigl(\frac{\gamma_C}{\gamma_1},\ldots,\frac{\gamma_C}{\gamma_C}\Bigr)U,
\end{align*}
which is \cref{eq:chunk_state}. \qed

\subsection{Why coefficient substitution suffices}

The entire derivation uses $\beta_i$ only through the diagonal matrix $B = \diag(\beta_1,\ldots,\beta_C)$ and the recurrences~\eqref{eq:w_recursion}--\eqref{eq:u_recursion}. No structural property of the GDN coefficient---such as the absence of $\norm{k_i}$ in the denominator---is invoked at any step. Replacing $\beta_i = \eta_i$ with $\beta_i = \eta_i/(\norm{k_i}^2 + \eps)$ therefore produces the complete \method{} chunkwise algorithm by substituting a different diagonal matrix $B$ into the same GDN kernel. No new CUDA kernel or parallel reduction is required.

\subsection{Extension to diagonal gating}

When the decay operator is diagonal, $D_t = \diag(\alpha_t)$ (as in GLA and KDA~\citep{kimiteam2025kimilinearexpressiveefficient}), the scalar $\gamma_i$ becomes a vector $\bm{\gamma}_i$ and all decay products become elementwise. The WY propositions generalize by replacing $\gamma_{r \to i}$ with $\diag(\bm{\gamma}_{r \to i})$ wherever it appears. The local Kaczmarz coefficient still applies unchanged; only the transition algebra inside the chunk changes. The chunkwise derivation for diagonal-gated \method{} follows the KDA template with $B$ substituted accordingly.

\section{Experimental Details}
\label{app:details}

\paragraph{Architecture.}
All models use 24 layers, hidden size 1024, 8 attention heads, head dimension 128, and FFN expansion $4\times$ (0.4B parameters total).

\paragraph{Pretraining hyperparameters.}
AdamW with $\beta_1=0.9$, $\beta_2=0.95$, weight decay $0.1$, gradient clipping $1.0$, cosine learning rate schedule with linear warmup over 2\% of training steps. Stabilizer $\eps = 10^{-6}$.

\paragraph{Compute resources.}
All reported experiments used 4 NVIDIA A100 80GB GPUs. Each 1B-token pretraining run took approximately 4 hours. We did not run multi-seed pretraining because of limited compute.

\paragraph{Synthetic tasks.}
All tasks follow the GDN~\citep{yang2025gated} configurations with seed~42. MQAR: 2-layer model, 10K steps, trained at length~256. S-NIAH: 6K steps. Palindrome: 20K steps. Stack: 20K steps.

\section{Ablation Study Experimental Setup}
\label{app:ablation_setup}

For the ablations in Section~\ref{sec:ablation}, each architectural variant uses the same task pipeline and random seed ($42$). 

\paragraph{MQAR Evaluation Protocol.}
The Multi-Query Associative Recall (MQAR) datasets use sequence length $256$, key length $1$, and $32$ key-value pairs.
Keys are sampled from the first half of the vocabulary, and values from the second half.
The split contains 20,000 training, 2,000 validation, and 2,000 test sequences.
All variants train for at most 10,000 steps with batch size $32$, learning rate $10^{-3}$, and weight decay $0.1$.
Early stopping uses a patience of $10$ evaluations, with evaluation every 200 steps.
Length extrapolation is evaluated at test factors of $1\times$, $2\times$, $4\times$, and $8\times$, prioritizing the $8\times$ metric for final reporting.

\paragraph{100M-Token Pretraining Protocol.}
To evaluate general language modeling stability, each variant undergoes pretraining on a 100M-token subset of the SlimPajama training split.
We use sequence length $4096$ and an optimizer learning rate of $10^{-4}$.
Validation perplexity is computed over $20$ validation runs, each averaging $15$ iterations.
For high-capacity ablations such as $v_{\text{expand}} \in \{8, 16\}$, the micro-batch size is adjusted downward to avoid out-of-memory errors without changing the global token budget.

\paragraph{Variant Configurations.}
The ablation baseline uses the default KLA configuration: $\beta_t = \eta_t/(\|k_t\|_2^2 + \varepsilon)$ with dual gating.
\begin{itemize}[leftmargin=1.5em]
    \item \textbf{Ablation 1 (Normalization):} Changes the coefficient $\beta_t$. The ``Learned Scalar'' initializes a trainable parameter at $1.0$. ``NoNorm'' reverts to a standard sigmoid gating $\beta_t = \sigma(b_t)$.
    \item \textbf{Ablation 2 (Sequence Factor):} Multiplies the KLA coefficient by $1/t$, $1/\sqrt{t}$, or $1/\log(t+1)$, where $t$ is the absolute sequence index.
    \item \textbf{Ablation 2 (Gating):} Changes the recurrent decay and update gates. ``Single Gate'' binds the decay and delta-update to a single projected mechanism.
    \item \textbf{Ablation 3 (State Dimension):} Changes the internal value expansion ratio in the linear-attention projection block.
\end{itemize}

\section{Related Work}
\label{app:related}

\paragraph{Positioning.}
We compare sub-quadratic sequence models by the part of the long-context mechanism they change: the attention approximation, recurrent transition, gate, state layout, or online write rule.
\Cref{tab:positioning} localizes \method{} in this design space.
The claim of this paper is intentionally narrow: given the scalar-gated delta-rule template of GDN, the write coefficient should be the closed-form Kaczmarz step size rather than a learned or heuristic scalar.
Thus, \method{} keeps the GDN state, gates, recurrence, and chunkwise solver, but changes the coefficient that determines whether each residual write solves its local regression problem.

\begin{table*}[t]
\centering
\small
\renewcommand{\arraystretch}{1.2}
\resizebox{\textwidth}{!}{%
\begin{tabular}{lcccc}
\toprule
Model & Update mechanism & Gating granularity & Normalization / step size & Theoretical basis \\
\midrule
Mamba2 \citep{dao2024transformersssmsgeneralizedmodels} & Additive & Scalar (decay) & None & Continuous-time SSM \\
GLA \citep{pmlr-v235-yang24ab} & Additive & Diagonal (decay) & None & Fast weights \\
KDA \citep{kimiteam2025kimilinearexpressiveefficient} & Delta rule & Diagonal (decay) & Local $\ell_2$ + exponentiation & Heuristic scaling \\
GDN \citep{yang2025gated} & Delta rule & Scalar (decay + update) & Learned scalar $\eta_t$ & Heuristic design \\
\rowcolor{gray!10} \textbf{\method{} (Ours)} & Delta rule & Scalar (decay + update) & Kaczmarz $\eta_t/(\|k_t\|_2^2 + \varepsilon)$ & Per-token optimization \\
\bottomrule
\end{tabular}}
\caption{Positioning of \method{} against representative sub-quadratic sequence models. \method{} is the only model in the table that derives the key-norm normalization in its delta-rule step from exact per-token loss minimization.}
\label{tab:positioning}
\end{table*}

\paragraph{Linear attention, RNNs, and fast weights.}
Linear attention replaces softmax attention with kernelized inner products and exposes an equivalent recurrent form with a fixed-size associative state~\citep{pmlr-v119-katharopoulos20a}.
Fast-weight interpretations make the same object explicit: the state is a rapidly updated matrix written by the sequence itself~\citep{pmlr-v139-schlag21a,6796337}.
RetNet, RWKV, and Attention-Free Transformer refine retention or channel mixing, while xLSTM, HGRN2, and GateLoop add extended memory, state expansion, or data-controlled transitions~\citep{sun2023retentivenetworksuccessortransformer,peng2023rwkvreinventingrnnstransformer,zhai2021attentionfreetransformer,beck2024xlstmextendedlongshortterm,qin2024hgrn,katsch2024gateloopfullydatacontrolledlinear}.
These models establish the value of fixed-size recurrent memory, but they do not isolate the delta-rule step length as an optimization target.
\method{} follows the associative-state view, yet it does not introduce a new recurrent cell or a larger state; it isolates the rank-one residual write coefficient and normalizes it by the current key norm.

\paragraph{State-space and hybrid sequence models.}
State-space models pursue linear-time sequence modeling through structured latent dynamics rather than matrix-valued associative writes.
S4, H3, and S5 use structured state transitions or simplified state-space layers~\citep{gu2022efficientlymodelinglongsequences,fu2023hungryhungryhipposlanguage,smith2023simplified}.
Mamba and Mamba2 scale selective input-dependent dynamics and structured state-space duality~\citep{gu2024mamba,dao2024transformersssmsgeneralizedmodels,waleffe2024empiricalstudymambabasedlanguage}.
Hybrid systems such as Griffin and Jamba combine gated linear recurrences or Mamba-style layers with local or full attention to recover attention-like behavior~\citep{de2024griffinmixinggatedlinear,lieber2024jambahybridtransformermambalanguage}.
These works optimize the latent transition or the mixture of recurrent and attention layers.
\method{} addresses a different object: a matrix-state delta-rule update whose content selectivity comes from projecting along the current key direction.

\paragraph{Gated linear attention and state allocation.}
Gated linear attention adds data-dependent decay to linear attention while preserving hardware-efficient training~\citep{pmlr-v235-yang24ab}.
Subsequent variants change how the recurrent state is allocated, decayed, or parallelized.
Examples include constant-speed Lightning Attention, slot-based gated state, log-linear attention, and Kimi Linear's KDA with diagonal gating and a specialized chunkwise solver~\citep{qin2024variouslengthsconstantspeed,zhang2024gatedslotattentionefficient,guo2026loglinearattention,kimiteam2025kimilinearexpressiveefficient}.
This line of work mainly asks how expressive the gate and state layout should be.
\method{} asks an orthogonal question: for a fixed GDN-style scalar gate and state shape, what coefficient makes the residual write an exact local projection in the unrelaxed limit?

\paragraph{Delta-rule models and test-time learning.}
Delta-rule models replace additive fast-weight writes with residual-correcting updates.
DeltaNet parallelizes these updates over sequence length, and GDN adds scalar decay and update gates; \cref{sec:background} gives their formulations because they are the direct predecessors of \method{}~\citep{yang2024parallelizing,yang2025gated}.
Related online-learning views include Longhorn's amortized online learner and DeltaProduct's Householder-product state-tracking mechanism~\citep{liu2024longhornstatespacemodels,siems2025deltaproductimprovingstatetrackinglinear}.
Broader test-time-learning methods adapt hidden states or weights during inference through expressive inner learners, regression views, forgetting mechanisms, or test-time-training objectives~\citep{sun2025learninglearntesttime,wang2025testtimeregressionunifyingframework,lin2025forgettingtransformersoftmaxattention,zhang2025testtimetrainingright}.
\method{} is closest to GDN, but the modification is smaller and more specific than these broader online learners: it adds no auxiliary model, no iterative inner loop, and no new state type.
Its contribution is to replace the learned scalar step with the Kaczmarz coefficient for the single-constraint regression problem.

\paragraph{Efficient attention and long-context systems.}
Efficient-attention methods reduce the cost of softmax attention through IO-aware kernels, work partitioning, sparse patterns, low-rank or kernelized approximations, blockwise processing, and ring-style sequence parallelism~\citep{dao2022flashattentionfastmemoryefficientexact,dao2023flashattention2fasterattentionbetter,kitaev2020reformerefficienttransformer,zaheer2021bigbirdtransformerslonger,wang2020linformerselfattentionlinearcomplexity,choromanski2022rethinkingattentionperformers,liu2023blockwiseparalleltransformerlarge,brandon2023stripedattentionfasterring}.
Decoding-oriented variants such as multi-query and grouped-query attention reduce key-value cache size or head redundancy while keeping the attention interface~\citep{shazeer2019fasttransformerdecodingwritehead,ainslie2023gqatraininggeneralizedmultiquery}.
These methods make token-token attention cheaper, sparser, lower-rank, or more parallel.
\method{} removes the token-token attention matrix during autoregressive generation by using a fixed-size recurrent state, and its Kaczmarz coefficient fits into the existing GDN chunkwise algorithm without changing the kernel structure.


\end{document}